\def\authorBlock{
    Caner Korkmaz\textsuperscript{1} \qquad
    Brighton Nuwagira\textsuperscript{2} \qquad
    Barış Coşkunuzer\textsuperscript{2} \qquad
    Tolga Birdal\textsuperscript{1} \\
    \\
    \textsuperscript{1} Imperial College London \qquad
    \textsuperscript{2} UT Dallas
}
\newif\ifreview 
\newif\ifarxiv \newcommand{\arxiv}{\arxivtrue}
\newif\ifcamera 
\newif\ifrebuttal 
\ifreview \usepackage[review]{cvpr} \fi
\ifarxiv \usepackage[pagenumbers]{cvpr} \fi
\ifrebuttal \usepackage[rebuttal]{cvpr} \fi
\ifcamera \usepackage{cvpr} \fi
\ifcamera \usepackage[accsupp]{axessibility} \fi
\newcommand{\R}{\mathbb{R}}
\newcommand{\Z}{\mathbb{Z}}
\newcommand{\PD}{\mathrm{PD}}
\newcommand{\wh}{\widehat}
\newcommand{\N}{\mathcal{N}}
\newcommand{\I}{\mathcal{I}}
\newcommand{\D}{\mathcal{D}}
\newcommand{\W}{\mathcal{W}}
\newcommand{\B}{\mathbf{B}}
\newcommand{\G}{\mathcal{G}}
\newcommand{\M}{\mathcal{M}}
\newcommand{\X}{\mathcal{X}}
\newcommand{\CC}{\mathcal{K}}
\newcommand{\CCfilt}{\wh{\CC}}
\newcommand{\VectSP}{\Psi}
\newcommand{\VectMP}{\Psi_{\mathrm{MP}}}
\newcommand{\Vect}{\varphi}
\newtheorem{theorem}{Theorem}
\newtheorem*{theorem*}{Theorem}
\newtheorem{thm}{Theorem}
\newtheorem{remark}{Remark}
\newtheorem{cor}{Corollary}
\newtheorem{example}{Example}
\newtheorem{prop}{Proposition}
\newtheorem{dfn}{Definition}
\newcommand*{\addFileDependency}[1]{
  \typeout{(#1)}
  \@addtofilelist{#1}
  \IfFileExists{#1}{}{\typeout{No file #1.}}
}
\newcommand*{\myexternaldocument}[1]{
    \externaldocument{#1}
    \addFileDependency{#1.tex}
    \addFileDependency{#1.aux}
}
\definecolor{cvprblue}{rgb}{0.21,0.49,0.74}
\crefname{equation}{\text{Eq}}{\text{Eq}}
\crefname{definition}{\text{Dfn.}}{\text{Dfn.}}
\crefname{lemma}{\text{Lemma}}{\text{Lemma}}
\crefname{dfn}{\text{Dfn.}}{\text{Dfn.}}
\crefname{thm}{\text{Thm.}}{\text{Thm.}}
\crefname{tab}{\text{Tab.}}{\text{Tab.}}
\crefname{fig}{\text{Fig.}}{\text{Fig.}}
\crefname{table}{\text{Tab.}}{\text{Tab.}}
\crefname{figure}{\text{Fig.}}{\text{Fig.}}
\crefname{section}{\text{Sec.}}{\text{Sec.}}
\crefname{table}{Table}{Tables}
\crefname{figure}{Fig.}{Figs.}
\ifarxiv \crefname{appendix}{App.}{Apps.}
\else \crefname{appendix}{Suppl.}{Suppls.} \fi
\newcommand{\name}{CuMPerLay}
\renewcommand{\paragraph}[1]{{\vspace{1mm}\noindent \bf #1}.}
\ifarxiv \myexternaldocument{_supplementary} \fi
\begin{document}
\title{\name: Learning Cubical Multiparameter Persistence Vectorizations}
\author{\authorBlock}
\maketitle

\begin{abstract}
We present \name, a novel differentiable vectorization layer that enables the integration of Cubical Multiparameter Persistence (CMP) into deep learning pipelines. 
While CMP presents a natural and powerful way to topologically work with images, its use is hindered by the complexity of multifiltration structures as well as the \emph{vectorization} of CMP.
In face of these challenges, we introduce a new algorithm for vectorizing MP homologies of cubical complexes. Our \name~decomposes the CMP into a combination of individual, learnable single-parameter persistence, where the bifiltration functions are jointly learned. Thanks to the differentiability, its robust topological feature vectors can be seamlessly used within state-of-the-art architectures such as Swin Transformers. We establish theoretical guarantees for the stability of our vectorization under generalized Wasserstein metrics. Our experiments on benchmark medical imaging and computer vision datasets show the benefit \name~on classification and segmentation performance, particularly in limited-data scenarios. Overall, \name~offers a promising direction for integrating global structural information into deep networks for structured image analysis. 
\end{abstract}

\section{Introduction}
\label{sec:intro}
In recent years, computer vision has witnessed remarkable advancements driven by deep learning (DL) techniques, leading to breakthroughs in classification, segmentation, and object detection. However, despite their success, DL models often struggle to effectively capture the intricate geometric and topological structures inherent in image data~\cite{cao2020comprehensive,zia2024topological,kyriakis2021learning,yang22efficient}. These limitations have spurred interest in complementary approaches that can provide a more holistic understanding of image content by incorporating global structural information beyond pixel-wise feature extraction.  

A promising direction is \textbf{topological data analysis}~\cite{carlsson2021topological} (TDA), which provides a rigorous framework for analyzing high-order structures in data. \textbf{Persistent homology} (PH), in particular, has been widely applied to extract topological descriptors that capture shape and connectivity in complex datasets~\cite{dey2022computational}. A more recent and advanced variant, \textbf{multiparameter persistence}~\cite{kaczynski2006computational} (MP), extends this approach by tracking multiple filtrations, allowing it to encode a finer-grained representation~\cite{botnan2022introduction} and enabling the extraction of richer structural features~\cite{coskunuzer2024topological}. While MP has shown considerable success in graph representation learning and point cloud analysis~\cite{demir2022todd,loiseaux2023stable}, its application to image analysis remains relatively unexplored, arguably due to the \textbf{vectorization} algorithms suited for \textbf{cubical complexes}, the native topological structure of images. 

In this work, we introduce \textbf{\name}, a novel, differentiable {vectorization} layer for {multiparameter persistence} over {cubical complexes}. While incorporating topological features from single-parameter persistence (SP) is relatively standard~\cite{carriere2020perslay,birdal2021intrinsic,surrel22ripsnet}, extending this to the cubical MP (CMP) setting is highly non-trivial. We propose to decompose the CMP into a differentiable combination of individual, learnable single filtrations. We then deploy \name~as a differentiable layer within a deep architecture, particularly Swin transformers~\cite{liu2022swin}. 
Our theoretical analysis shows that \name~shows improved stability when compared to SP counterparts.
By seamlessly integrating CMP into deep networks in an end-to-end manner, our layer helps discerning topological features essential for structured image analysis.

Our contributions can be summarized as follows:  
\begin{itemize}[noitemsep,leftmargin=*,topsep=0em]
\item We introduce a differentiable, novel vectorization for Cubical Multiparameter Persistence, combining learnable filtration functions with learnable vectorizations, and integrate into state-of-the-art image-based learning pipelines.  
\item We theoretically prove the stability of our \name\ along with a family of other filtration functions we devise.
\item We propose a hybrid DL model integrating CMP with Swin transformers, demonstrating synergistic improvements.
\item Finally, to the best of our knowledge, we develop the first (publicly available) CUDA GPU implementation of Cubical Persistence and Cubical Multiparameter Persistence.
\end{itemize}
Our experiments on benchmark medical imaging and computer vision datasets show that our topological vectors improve model performance in both standard and limited-data scenarios. 
Our sources can be found under:~\href{https://github.com/circle-group/cumperlay}{https://github.com/circle-group/cumperlay}.

\section{Related Work}
\label{sec:related}

\paragraph{TDA in image analysis} 
TDA has become a powerful tool in image processing, extracting high-level structural features invariant to transformations like rotation and noise. Early studies \cite{bendich2016persistent} applied TDA to brain MRI scans, distinguishing normal and pathological structures. More recent works \cite{clough2020topological, Xia2021,peng2024phg} integrated PH with DL, improving feature extraction and interpretability. Similarly, \cite{Zhao2022, yadav2023histopathological} showed TDA-based features enhance deep networks in histopathological image classification by capturing robust cellular representations. TDA has also improved image denoising~\cite{Sazbon2021}, demonstrating PH preserves geometric structures while filtering noise.  

Other studies used TDA as a topological loss function to enforce structural constraints during training. \cite{Hofer2019} introduced a topological signature layer for CNNs, enabling end-to-end learning of PH features, while \cite{Gabrielsson2022} proposed a differentiable topological loss to optimize structure persistence, improving medical image segmentation. The incorporation of topological features has significantly enhanced CNNs \cite{wong2021persistent, clough2020topological, Xia2021, santhirasekaram2023topology, stucki2023topologically, kwangho20pllay}. For a comprehensive review of TDA in biomedicine, see \cite{singh2023topological}. PHG-Net~\cite{peng2024phg} incorporated cubical single-persistence topological features using a PointNet~\cite{qi2017pointnet}-based encoder into recent architectures like Swin Transformers~\cite{liu2022swin} and SENet~\cite{hu2019senet}. Future research will refine topological priors, differentiable persistence computations~\cite{carriere21optimizing}, and domain-specific loss functions.

\smallskip

\noindent \textbf{Multiparameter Persistence (MP).}
MP theory enhances single persistence by capturing topological features from multidimensional filtrations, but theoretical challenges—such as the partially ordered threshold set—have hindered its adoption in machine learning (See suppl. material for detailed discussion). Most methods rely on \emph{slicing} to produce one-dimensional persistence diagrams~\cite{lesnick2015interactive, carriere2020multiparameter}, leading to information loss and dependence on slicing directions \cite{botnan2022introduction}. Alternatives extend persistence landscapes to higher dimensions, improving representation at the cost of increased complexity~\cite{vipond2020multiparameter}. Recently, \citet{loiseaux2023stable} introduced stable vectorization using signed barcodes as Radon measures, generating robust feature vectors. While effective for point clouds and graphs, these methods do not generalize to the cubical complexes considered here.  

MP has shown promise in graph representation learning \cite{loiseaux2024framework,demir2022todd} but remains underexplored in image analysis. Notable exceptions include \cite{chung2022multi}, which applied MP to mathematical morphology for topology-aware image processing and denoising, achieving results comparable to deep learning. Expanding on this, \cite{chung2024morphological} quantified mitochondrial branching morphology to study gene mutations affecting structure. While prior works focus on specific filtrations and Betti numbers, we extend MP analysis across diverse contexts and introduce flexible vectorization methods for MP modules, enabling integration into downstream DL tasks.

\section{Background}
\label{sec:bg}
We dedicate this section to exposing the less commonly known \emph{cubical complexes} and \emph{cubical multipersistence} to computer vision community with a bottom-up construction. Our exposition is inspired by~\cite{kaczynski2006computational,dey2022computational,coskunuzer2024topological}. %

\subsection{Cubical Persistence}

\begin{dfn}[Elementary interval]
    An elementary interval or a \textbf{$1$-cube}, is a closed interval $I\subset\R$: $I=[l,l+1]$ for some $l\in\Z$. We denote $[l]=[l,l]$ for a point or \textbf{0-cube}, and $I=[0,1]$ the \textbf{unit interval}. 
\end{dfn}

\begin{dfn}[Elementary cube]
    A $d$-dimensional elementary cube or \textbf{$d$-cube} $Q$ is the product of finitely many elementary intervals $Q=I_1\times I_2\times\cdots\times I_d \in \R^d$. Intuitively, the elementary cubes of a 3D grid include vertices, edges, squares ($2$-cubes) and voxels ($3$-cubes). 
\end{dfn}

\begin{dfn}[Boundary]
A boundary of an elementary interval $I$ is composed of its two endpoints: $\partial I=\partial[l,l+1]=[l+1,l+1]-[l,l]=\{l,l+1\}$. 
The \textbf{boundary of a cube} is the set of
its $(d-1)$-dimensional faces, and is a chain obtained in the following way:
\begin{equation}
    \partial Q=\bigcup_{i=1}^d \left(I_1 \times \cdots \partial I_i \times \cdots I_d\right).
\end{equation}
\end{dfn}

\begin{figure}[t] 
\centering
    \includegraphics[width=\columnwidth]{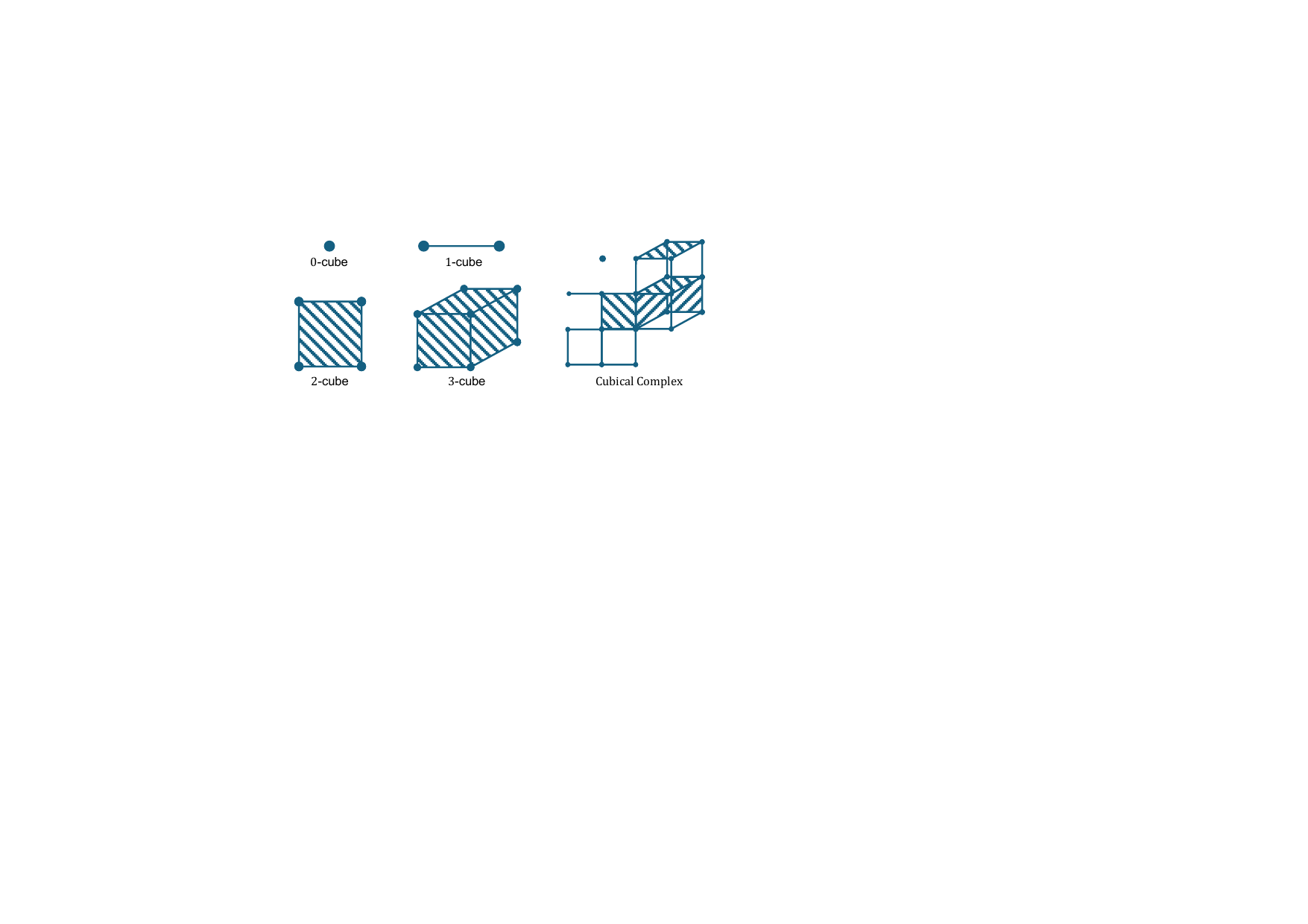} 
    \vspace{-6mm}
 {\caption{Elementary complexes of different dimensions and an exemplary cubical complex. \label{fig:CC}}}
 \vspace{-4mm}
\end{figure}

\begin{dfn}[Face]
For two elementary cubes $Q$ and $Q^\prime$, we define $Q$ to be a \textbf{face} of $Q^\prime$, denoted $Q\subseteq Q^\prime$, if $I_i\subset I_i^\prime$ for all $i=1,\dots,d$. Similarly, $P$ is said to be a \textbf{coface} of $Q$.
\end{dfn}

\begin{dfn}[Cubical complex (CC)]
An (elementary) cubical complex $\CC$ is the union of a set of $d$-cubes such that:
\begin{enumerate}[noitemsep,topsep=0em]
    \item For every $Q\in\CC$, if $P\subseteq Q$, then $P\in\CC$.
    \item If $Q\in\CC$, then all faces of $Q$ are also in $\CC$. 
\end{enumerate}
\end{dfn}
Intuitively, a CC represents a grid as a \emph{cell complex} with cells of all dimensions as illustrated in~\cref{fig:CC}.

\begin{dfn}[Cubical map]
A map $h:\CC\to\CC^\prime$ is said to be cubical if $Q\subseteq Q^\prime\in\CC \implies h(Q)\subseteq h(Q^\prime)\in\CC^\prime$.
\end{dfn}

\begin{dfn}[Chain complex of a CC]
A $n$-\textbf{chain} is a {formal finite sum}\footnote{For a set $S$ and a free Abelian group on $S$, the formal finite sum $f$ is an element of $\bigoplus_{s \in S} \mathbb{Z} = \left\{ f : S \to \mathbb{Z} \;\middle|\; \{ s \in S \mid f(s) \neq 0 \} \text{ is finite} \right\}$.} of $n$-dimensional cubes with integer coefficients. A \textbf{chain group} $C_n$ is the (\emph{free Abelian}) group generated by the set of $n$-dimensional cubes in the complex. The \textbf{cubical chain complex} $C_{\star}(\CC)$ of a CC is the sequence of chain groups and boundary operators given by:

{\small%
\setlength{\abovedisplayskip}{-1pt}%
\setlength{\belowdisplayskip}{2mm}%
\begin{align}
        \cdots \xrightarrow{\partial_{n+1}} C_n(\CC) \xrightarrow{\partial_n} C_{n-1}(\CC) \xrightarrow{\partial_{n-1}} \cdots \xrightarrow{\partial_1} C_0(\CC) \to 0.\nonumber
    \end{align}
    }
    The boundary operators satisfy $\partial_n \circ \partial_{n+1} = 0$ for all $n$.
\end{dfn}

The topological structure of a CC is characterized by identifying its holes, defined as \emph{cycles} that are not boundaries of other objects. This notion can be made rigorous as follows:

\begin{dfn}[Homology]
For a cubical chain complex, a $n$-chain $z\in C_{n}$ is called a \textbf{cycle} (closed loop) if $\partial_n(z) = 0$. Every boundary is a cycle, so  $B_n(\CC)\subseteq Z_n(\CC)$.
The set of $n$-cycles $Z_n(\CC)$ and the set of boundary elements $B_n(\CC)$ are then defined as:
\begin{align}    
    Z_n(\CC) &\vcentcolon= \ker (\partial_n) = \{ c \in C_n(\CC) \mid \partial_n(c) = 0 \}\\
    B_n(\CC) &\vcentcolon= \mathrm{im} (\partial_{n+1}) = \{ \partial_{n+1}(c) \mid c \in C_{n+1}(\CC) \}
\end{align}
Taking the group of cycles $Z_n(\CC)$ and
factoring out the cycles which act as boundaries $B_n(\CC)$ to higher dimensional cubes in the
complex, we are left with the $n^{\mathrm{th}}$ \textbf{cubical homology group}, $H_n(\CC)$, capturing all the $n$-dimensional holes:
    \begin{align}
    H_n(\CC) = {Z_n(\CC)}/{B_n(\CC)} = {\ker (\partial_n)}/{\mathrm{im} (\partial_{n+1})}.
    \end{align}
    The \textbf{homology} of $X$ is the collection of all homology groups: $H_{\star}(\CC)=\left\{H_n(\CC)\right\}_{n\in\Z}$.
\end{dfn}

To extend the notion of homology to \emph{persistent homology}, we need the concept of a filtration:
\begin{dfn}[Filtration]\label{dfn:filt}
    A filtration is a finite (or countable) sequence of nested cubical complexes (\textbf{subcomplexes}):
    \begin{align}\label{eq:filt}
        \emptyset = \CC_0\subseteq\CC_1\subseteq\cdots\subseteq\CC_{n-1}\subseteq\CC_{n}=\CC.
    \end{align}
    A \textbf{filtration function} $f_{\CC}:\CC\to\R$ describes the filtration by assigning to each $d$-cube the first index at which it appears in the complex, capturing the notion that \textbf{a cube cannot appear strictly before its faces}: $f_{\CC}(P)\leq f_{\CC}(Q),\,\forall P\subset Q$. The \textbf{sublevel set} $\CC_i:=\CC(a_i)$ corresponding to $a_i\in\R$ is defined by $f_{\CC}^{-1}(-\infty,a_i]$, a {subcomplex} of $\CC$.
    \vspace{-2mm}
\end{dfn}
\begin{figure}[t] 
\centering
\includegraphics[width=\linewidth]{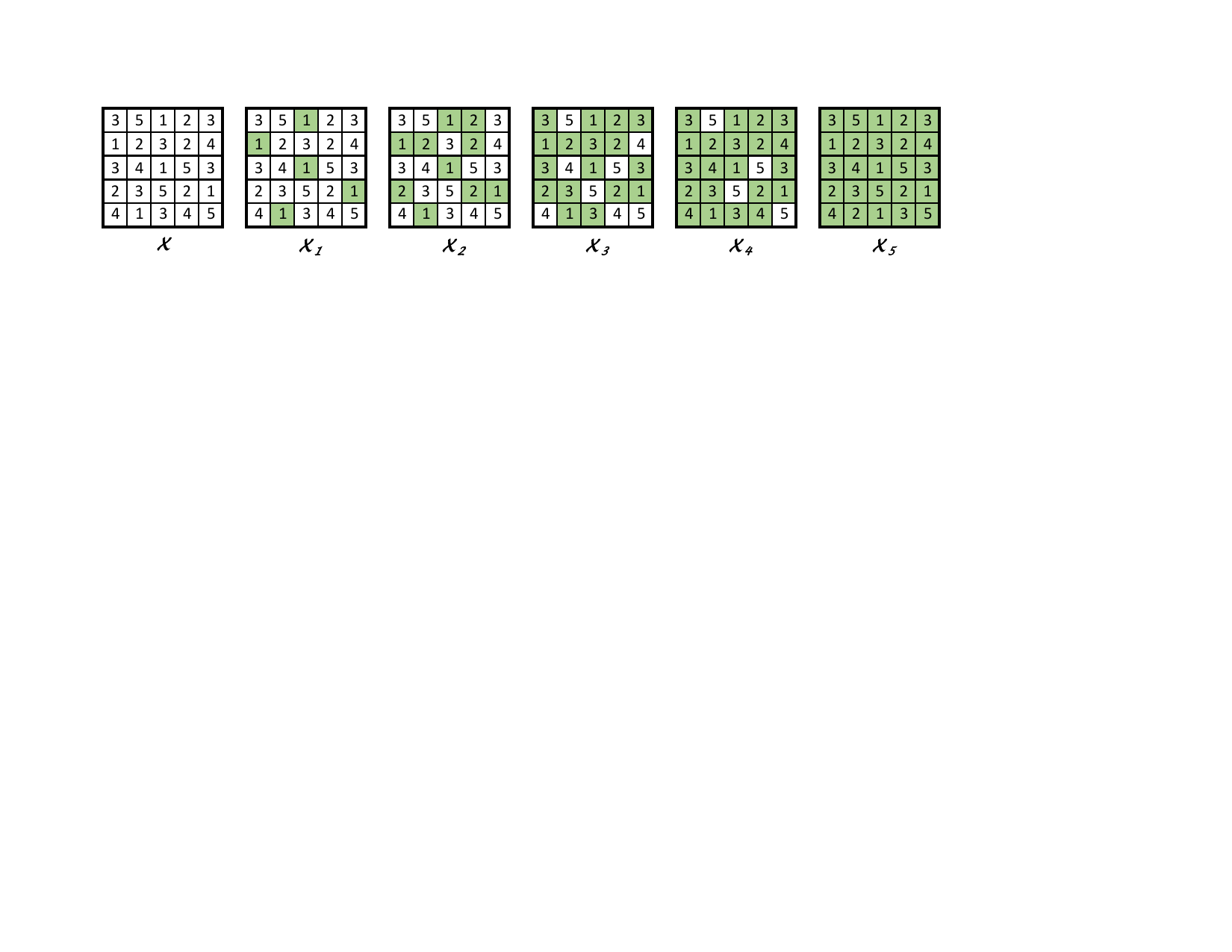}  \vspace{-5mm}
 \caption{{\bf Toy example.} For the $5\times 5$ image $\X$ with the given pixel values, \textbf{the sublevel filtration} is the sequence of binary images $\X_1\subset \X_2\subset \X_3\subset \X_4\subset \X_5$.\vspace{-4mm}}
 \label{fig:toy_filtration}
\end{figure}
\begin{example}[Threshold filtrations]Let $\X$ be a 2D grayscale image with pixels $\{\Delta_{ij} \subset \X\}$ and intensity values $I_{ij} \in [0,255]$ for an $M \times N$ grid. Given a sequence of thresholds $0 = \tau_1 < \tau_2 < \cdots < \tau_T = 255$, we obtain a nested sequence of binary images $\X_1 \subset \X_2 \subset \cdots \subset \X_T$, where $\X_T = \{\Delta_{ij} \subset \X \mid I_{ij} \leq \tau_T\}$ (see \cref{fig:toy_filtration}). This process, known as \textit{sublevel filtration}, starts with a blank image and activates pixels as their grayscale values reach the threshold $\tau_T$. Conversely, activating pixels in descending order yields a \textit{superlevel filtration}. For a color image, one can use individual color channels, e.g., $(R_{ij}, G_{ij}, B_{ij})$, for filtration. More generally, any nested sequence $\X_t \subset \X_{t+1}$ defines a filtration, independent of grayscale values or other functions.
\end{example}

In suppl. material, we present key bifiltration methods that have the potential to significantly enhance the performance of single-parameter homology in several settings. %

We are now ready to state the definitions that will lead to the construction of \emph{cubical persistent homology}. 

\begin{dfn}\label{dfn:persmod}
    Any cubical map $\CC_i\to\CC_j$ induces a linear map between the homology spaces due to \textbf{functoriality}: $\iota_{ij}:H_k(\CC_i)\to H_k(\CC_j)$ for $i\leq j$. Applying this on the filtration in~\cref{eq:filt} leads to a sequence of homology groups:
  {\small%
    \setlength{\belowdisplayskip}{2mm}%
    \begin{align}
    \label{eq:persmod}    H_k(\CC_0)\xrightarrow{\iota_{01}}H_k(\CC_1)\xrightarrow{\iota_{12}}\cdots H_k(\CC_{n-1})\xrightarrow{\iota_{n-1,n}}H_k(\CC_{n}).\nonumber
    \end{align}
    }    
\end{dfn}

\begin{dfn}[Cubical Persistent Homology (CPH)]\label{dfn:CPH}
    The sequence in \cref{dfn:persmod} is known to be the \textbf{persistence module}:
    \begin{equation}
        \mathcal{P}=\left\{H_k\left(\CC_i\right), \iota_{ij}\right\}_{0 \leq i \leq j \leq n},
    \end{equation}
    and defines the $k^\mathrm{th}$ CPH, essentially keeping track of how $k$-holes ($k$-dimensional voids) appear and disappear as we move from $\CC_0$ to $\CC_n$. CBH records these homological changes through \textbf{birth times} $b(\sigma)$ and \textbf{death times} $d(\sigma)$:
    \begin{align}
        b_{\sigma}:=b(\sigma)&:=\inf \left\{j \in I \mid \sigma \in \mathrm{Im} \iota_{ji}\right\}\\
        d_{\sigma}:=d(\sigma)&:=\inf \left\{j \in I \mid \sigma \notin \mathrm{Im} \iota_{ji}\right\}.
    \end{align}
    The \textbf{persistence} of a $k$-hole $\sigma$ refers to its \textbf{lifespan}, $d_\sigma-b_\sigma$ and the collection of half-open intervals (bars) $\{[b_\sigma,d_\sigma)\}$ is called a \textbf{persistence barcode}.
\end{dfn}

\begin{dfn}[Persistence Diagram (PD)] The $k^\mathrm{th}$ persistence diagram $\PD_k$ comprises all birth and death pairs of $k$-holes in the filtration $\CCfilt=\{\CC_i\}_{i=0}^n$:
\begin{equation}
    \PD_k(\CCfilt)=\{(b_\sigma, d_\sigma) \mid \sigma\in H_k(\CC_i) \mbox{ for } b_\sigma\leq i< d_\sigma\}. \nonumber
\end{equation}
\end{dfn}

Persistence Diagrams (PDs), consisting of collections of $2$-tuples, are not practical for utilization with ML tools. Instead, a common strategy is to transform PDs into a vector, a process referred to as \textit{vectorization}~\cite{ali2023survey}. 

\begin{dfn}[Vectorization]
    A vectorization of a PD is a map $\VectSP:\PD\to\R^M$ from the space of PDs to a fixed-dimensional Euclidean space, linearly representing the PD and allowing standard vector-based methods to be applied.
\end{dfn}

\begin{remark}[On computation]
    In practice, PDs can be efficiently computed via linear algebra as done in various efficient and robust C++ / Python libraries such as Ripser~\cite{ctralie2018ripser}, Gudhi~\cite{gudhi20}, and Giotto~\cite{tauzin2020giottotda}. Effective vectorizations can also be obtained by embeddings~\cite{adams2017persistence}, kernels~\cite{kusano2016persistence}, or neural networks as done in Perslay~\cite{carriere2020perslay}. MP for simplicial complexes can be computed through $\mathrm{multipers}$ (CPU-only). We release the first GPU implementation of MP involving CCs and share the pseudocode for our algorithm in suppl. material.
\end{remark}

We call PH applied on a single-parameter filtration, \textbf{single persistence} (SP), and next present \textbf{multipersistence} (MP).

\subsection{Cubical Multiparameter Persistence}
We now extend the definitions in the previous sections to the \emph{multiparameter} setting, constructed via a \emph{multifiltration}:
\begin{dfn}[Multifiltration]
    Given a CC $\CC$, let \(I_1, I_2, \ldots, I_m\) be totally ordered index sets, \eg $I_i = \{0,1,\ldots,N_i\}$, or intervals in $\R$. A {multifiltration} of $\CC$ is a family of subcomplexes 
    \begin{equation}\label{eq:multifilt}
     \Bigl\{ \CC_{t_1,t_2,\dots,t_m} \subseteq \CC \,\Bigr\}_{(t_1,t_2,\dots,t_m) \in I_1 \times I_2 \times \cdots \times I_m}   
    \end{equation}
    satisfying the \textbf{monotonicity condition}: if $(t_1, t_2, \dots, t_m) \le (s_1, s_2, \dots, s_m)$, \ie, $t_i \le s_i \text{ for all } i=1,\dots,m$, then $\CC_{t_1,t_2,\dots,t_m} \subseteq \CC_{s_1,s_2,\dots,s_m}$: the subcomplexes are nested simultaneously with respect to each parameter.
    If $m=2$, this is known to be a \textbf{bifiltration}. Similar to~\cref{dfn:filt}, we speak of a \textbf{bifiltration function} $F_{\CC}:\CC\to\R^2$ generating the sequence $\CC_{t_1,t_2}=F_\CC^{-1}(\Delta_{t_1,t_2})$ in~\cref{eq:multifilt} where $\Delta_{t_1,t_2}=(-\infty,t_1]\times(\infty,t_2]\subset \R^2$.
\end{dfn}

In what follows, we explicitly focus on bifiltrations of 2D images. Though, our construction extends naturally to higher dimensions, for example 3D voxel grids. For 2D images, there are numerous ways to construct bifiltrations, such as employing \emph{morphological operations}~\cite{chung2022multi} or thresholding~\cite{coskunuzer2024topological}. We give details of these bifiltrations in our suppl. material. It is important to note that determining an optimal bifiltration is a complex challenge and remains an active area of research. In this work, as we discuss in~\cref{sec:cumper}, we opt for learning bifiltrations in a data-driven manner.

We now state the biparameter persistence for CCs in a vein similar to SP:
\begin{dfn}[Cubical biparameter persistence]
Similar to~\cref{dfn:CPH}, but this time for any index pairs $(s,t),(s^\prime,t^\prime)$ with $(s,t)\leq(s^\prime,t^\prime)$ that is $s\leq s^\prime$ and $t\leq t^\prime$, the inclusion map $\CC_(s,t)\hookrightarrow\CC_(s^\prime,t^\prime)$ induces a homomorphism on homology: 
\begin{equation}
    \iota_{({s,t}),({s^\prime,t^\prime})}:H_k(\CC_{s,t})\to H_k(\CC_{s^\prime,t^\prime}),
\end{equation}
    and the collection
    \begin{equation}
        \mathcal{P}_k^\prime(\CC)=\left\{H_k\left(\CC_{s,t}\right), \iota_{(s,t),(s^\prime,t^\prime)}\right\}_{(s,t)},
    \end{equation}
    defines the cubical biparameter persistent $k$-homology. %
\end{dfn}

\begin{remark}[On multiparameter vectorizations]
    Unlike SP, there is no natural and precise extension of PD (or its vectorization) to MP outputs, primarily due to the \textbf{partial ordering problem} (See suppl. material). In a single filtration, the sequence $\{\CC_n\}$ is fully ordered and the region where a topological feature $\sigma$ persists would form a clear 1D interval $\I_\sigma$, easily expressed by $(b_\sigma, d_\sigma)$. In multifiltrations, this becomes highly complicated as the shape of the region $\I_\sigma$ might not resemble a perfect rectangle~\cite{botnan2022introduction,loiseaux2024framework}. Moreover, ordering a multifiltration $\{\CC_{s,t}\}$ for every pair of indices is far from trivial, \eg, $\CC_{2,5}$ is not comparable to $\CC_{3,4}$. Even with assumptions like barcode decomposability, expressing $\I_\sigma$ with birth and death indices becomes unfeasible unless $\I_\sigma$ is a perfect rectangle. 
\end{remark}
   On the other hand, there are readily available invariants for a multipersistence module. For example, for any pair $(s,t)$ and $(s^\prime,t^\prime)$ the rank of the map $\iota_{({s,t}),({s^\prime,t^\prime})}:H_k(\CC_{s,t})\to H_k(\CC_{s^\prime,t^\prime})$ is well-defined, and called {\em rank invariant} of the bipersistence module~\cite{botnan2022introduction}.  Similarly, $\mathcal{H}:I_1\times\dots I_2\to \mathbb{Z}$ with $\mathcal{H}(s,t)=\mbox{rank}H_k(\CC_{s,t})$ is called \textit{Hilbert function} of the bipersistence module. %
Our proposed \name, which we will introduce next, is a general MP vectorization for CCs, designed to address the aforementioned problem.

\section{CuMPer: Cubical MultiPersistence}
\begin{figure*}[t] 
 \centering
     	\includegraphics[width=\linewidth]{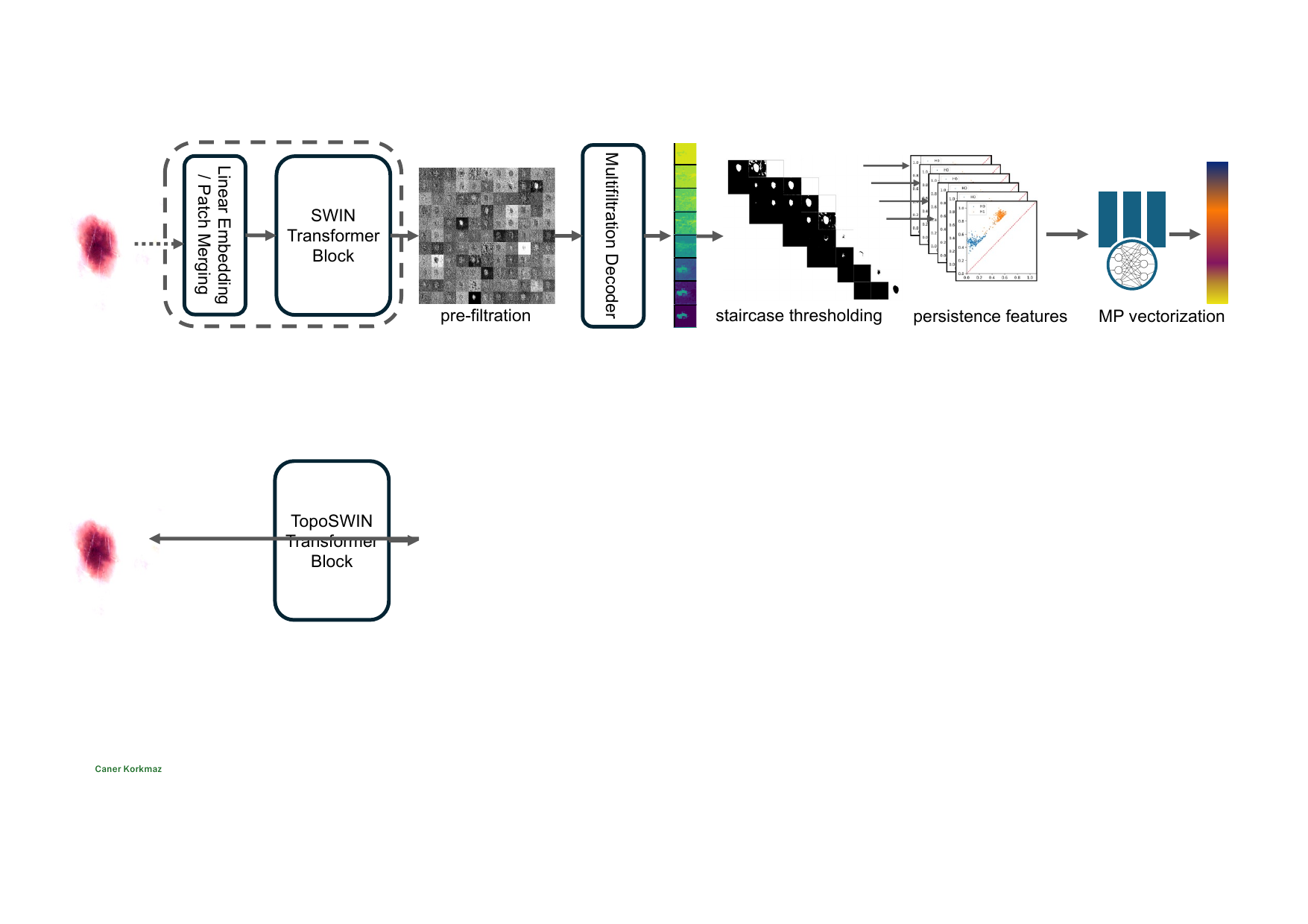}  
  \caption{{\bf \name.} Our differentiable layer can be deployed on top of existing deep networks such as Swin transformers to discern topological features of cubical complexes (\eg, images) by vectorizing multiparameter persistent homology.\vspace{-4mm}}
   \label{fig:filtration}
\end{figure*}

\subsection{Differentiable \name~Vectorization}
\label{sec:cumper}
We circumvent the aforementioned obstacles by employing a technique known as \emph{slicing}, involving the formation of horizontal slices on a MP grid of $M\times N$, where we generate a series of single-parameter filtrations denoted as $\{\CC_{s,\cdot}\}_1^M$. Next, by applying SP vectorizations to these individual filtrations, 
we derive $M$ distinct vectors. More formally:
\begin{prop}[\name] 
We introduce our cubical two-parameter vectorization $\VectMP$ to be: 
\begin{equation}\label{eq:VectMP}
    \VectMP\left(\CC\right)=\rho\left( \left[ \begin{bmatrix}
        \VectSP\left(\PD_0\left(\CC_{s,\cdot}\right)\right)\\
        \VectSP\left(\PD_1\left(\CC_{s,\cdot}\right)\right)
    \end{bmatrix}  
    \right]_t
    \right),
\end{equation}
where we choose the SP vectorization $\VectSP$ for the $s^\mathrm{th}$ row of the filtration to be a level-dependent PersLay~\cite{carriere2020perslay}:
\begin{equation}\label{eq:PersLay}
\VectSP(\PD(\CC_{\cdot,t})) :=  \mathrm{op}\left(\{w_t(p)\cdot \phi_t(p)\}_{p\in \PD(\CC_{s,\cdot})}\right).
\end{equation}
\end{prop}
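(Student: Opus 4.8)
The plan is to read this Proposition not as a bare definition but as the assertion that the displayed map $\VectMP$ is a \emph{well-defined, fixed-dimensional, and differentiable} vectorization of the cubical biparameter persistence module, and to verify these three properties in turn. Since the object is assembled in three stages, (i) a family of one-parameter slices, (ii) a per-slice single-parameter vectorization via level-dependent PersLay, and (iii) a final learnable aggregation $\rho$, I would prove the statement by establishing correctness at each stage and then composing.

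First I would show that each horizontal slice is a genuine single-parameter filtration. Fixing the first index $s$ and letting the second vary, the monotonicity condition of the multifiltration (since $(s,t)\le(s,t')$ whenever $t\le t'$) yields $\CC_{s,t}\subseteq\CC_{s,t'}$, so $\{\CC_{s,t}\}_t$ is a nested sequence of subcomplexes exactly as required by \cref{dfn:filt}. Hence the single-parameter theory applies verbatim: the induced maps on homology form a persistence module, and $\PD_k(\CC_{s,\cdot})$ is well-defined for $k\in\{0,1\}$ by the CPH construction of \cref{dfn:CPH}. This reduces the biparameter object to a finite collection of ordinary persistence diagrams indexed by the slice.

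Next I would verify that the level-dependent PersLay produces a fixed-dimensional vector. For each slice, the weight $w_t$ and the point transform $\phi_t$ send every birth--death pair $p\in\PD_k$ into $\R^M$, and the permutation-invariant operator $\mathrm{op}$ (e.g.\ sum or max) collapses the resulting finite multiset $\{w_t(p)\cdot\phi_t(p)\}$ to a single element of $\R^M$, independent of the number of bars; this is precisely the requirement in the Vectorization definition. Stacking the degree-$0$ and degree-$1$ outputs and indexing over the slices gives a tensor of fixed shape, on which $\rho$ acts to return $\VectMP(\CC)\in\R^M$. Composing the three stages establishes that the map is well-defined, reproducing \cref{eq:VectMP,eq:PersLay}.

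The main obstacle is differentiability, which is the property the construction is really asserting. Here I would use the cubical-specific fact that, under a sublevel filtration, every birth and death coordinate equals the filtration value $f_\CC(Q)$ of the particular cube $Q$ that creates or destroys the feature; thus the map from the (learnable) filtration values to the diagram coordinates is locally a coordinate \emph{selection}, hence linear, as long as the combinatorial birth--death pairing is held fixed. This pairing is locally constant off the measure-zero locus of ties in filtration values, by the standard differentiable-persistence argument~\cite{carriere21optimizing}. Away from that locus the Jacobian is a sparse permutation-type matrix, and since $w_t,\phi_t$ are smooth in the diagram coordinates, $\mathrm{op}=\sum$ is smooth ($\max$ merely subdifferentiable), and $\rho$ is smooth, the composition is differentiable almost everywhere. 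The delicate points are exactly the non-smoothness of $\mathrm{op}$ and the pairing discontinuities, which I would control by restricting attention to the generic regime of distinct filtration values; the resulting stability statement under generalized Wasserstein metrics is then deferred to the separate theorem.
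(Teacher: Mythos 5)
Your proposal is correct and takes essentially the same route as the paper: the Proposition is a construction whose implicit content is exactly what you verify, namely that horizontal slicing (via the monotonicity of the bifiltration in the second index) reduces the biparameter module to ordinary one-parameter persistence diagrams, that the level-dependent PersLay collapses each diagram to a fixed-dimensional vector regardless of the number of bars, and that differentiability holds generically because birth--death coordinates are coordinate selections from the filtration values (the paper handles this informally via straight-through gradients to the generating pixels, and defers stability to the separate theorem, just as you do).
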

While a family of choices are possible for the functions $\mathrm{op}$, $w_t$ and $\phi_t$, in this work, we make the specific choices. $\mathrm{op}$ denotes the $\mathrm{sum}$. Given weights $\mathbf{w}_t$,
$w_t(b, d\,\mid\,\mathbf{w}_t) := |d-b|^{\mathbf{w}_t}$ denotes the \emph{power weighting function} for each persistence pair. The \emph{transformation function} follows the triangle point transformation $\phi_t = \phi_{\Lambda}:\R^2\rightarrow\R^q$ sends $p\mapsto \begin{bmatrix}
	\Lambda_p(t_1), \Lambda_p(t_2), \dots, \Lambda_p(t_q)
    \end{bmatrix}^\top$, where $q\in\mathbb{N}$ and
\begin{equation}
    \Lambda_p \colon t\mapsto \max \{0, 0.5 (d-b)-|t-0.5(b+d)|\}.
\end{equation}
Finally, the aggregator $\rho:\PD^{2\times N_t}\to\R^D$ summarizes the information into a single vector.

\begin{remark}[On differentiability and learnability]
\cref{eq:VectMP} forms a \textbf{Silhouette representation}~\cite{chazal2014stochastic} that can be learned separately for each row in the MP.
In particular, the weights $\mathbf{w}_t$ as well as the transformation times $t_1,\dots,t_q\in\R$ remain learnable, as well as the parameters of the aggregation function $\rho$, which we model as a multi-layer perceptron (MLP).
We implement the Cubical PH in CUDA and propagate straight-through gradients to pixel locations that produce the corresponding persistence pairs, followed by a differentiable row-wise vectorization.
This allows us to learn the parameters of the inverse filtration functions $f_{\CC}^{-1}$, which we model as neural nets (\cref{sec:arch}) to produce $\{\CC_{s,t}\}$. 
\end{remark}

\smallskip

\noindent \textbf{Stability theorem.} \quad%
Given the technical challenges of obtaining multipersistence outputs like persistence diagrams, we circumvent this by using \textit{slicing}, a standard method in the field. By applying horizontal slices to an $M \times N$ multipersistence grid, we generate single persistence (SP) filtrations $\{\wh{\X}_i\}_1^{M}$. Using SP vectorizations, we obtain $M$ vectors $\vec{\varphi}_i=\varphi(\PD(\wh{\X}_i))$, which are then arranged as rows in a 2D tensor $\M_\varphi$, termed the induced MP vectorization of $\varphi$.

Here, we prove that if the original SP vectorization $\varphi$ is stable (see suppl. material), then so is its induced MP vectorization $\M_\varphi$ with respect to the generalized metrics. These metrics represent natural extensions of the Wasserstein metric to multipersistence modules (see suppl. material).

\begin{thm}[Stability of induced MP vectorization] \label{thm:stability} 
Let $\Vect$ be a stable SP vectorization\footnote{All SP vectorizations considered in this work are stable. We make this notion precise in our suppl. material. $\Vect$ need not necessarily correspond to the one in~\cref{eq:VectMP}.}. 
Let $\M_\Vect$ denote the \textbf{induced MP vectorization} obtained by collecting $m$ separate vectors $\Vect_t=\Vect(\PD(\CC_{s,\cdot}))$ into a $2$-tensor, $\M_\Vect$. Then, there exists a constant $\wh{C}_\varphi>0$ s.t. for any pair of CCs $\CC^1$ and $\CC^2$:
\begin{equation}\label{eq:multistable}
    \resizebox{\hsize}{!}{$\mathfrak{D}(\M_\varphi(\CC^1),\M_\varphi(\CC^2))\leq C_\varphi\cdot 
    \sum_{i=1}^m \W_{p}(\PD(\wh{\CC}_i^1),\PD(\wh{\CC}_i^-2)$},\nonumber
\end{equation}
where $\W_{p}$ denotes the $p$-\textbf{Wasserstein distance} on PDs and $\mathfrak{D}$ specifies a distance on the induced vectorizations
\begin{equation}
 \mathfrak{D}(\M_\varphi(\CC^1),\M_\varphi(\CC^2))=\sum_{i=1}^m \mathrm{d}(\varphi(\CC^1_i),\varphi(\CC^2_i)),   
\end{equation}
where $d(\cdot,\cdot)$ is a metric, usually chosen to be the $\ell_2$ distance. %
\end{thm}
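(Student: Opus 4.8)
The plan is to reduce the multiparameter statement to a purely slice-wise application of the assumed single-parameter stability of $\Vect$. Because both the metric $\mathfrak{D}$ on the induced vectorizations and the target quantity on the right-hand side are $\ell_1$-type sums over the $m$ horizontal slices, the inequality should decouple into $m$ independent single-parameter estimates that are then added. First I would fix the two complexes $\CC^1$ and $\CC^2$ and note that, since their bifiltrations live on grids of the same shape $M\times N$, slicing produces a matched pair of single-parameter filtrations $\wh{\CC}_i^1$ and $\wh{\CC}_i^2$ for each row index $i=1,\dots,m$, so that the diagrams $\PD(\wh{\CC}_i^1)$ and $\PD(\wh{\CC}_i^2)$ are directly comparable under $\W_p$. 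It is worth observing that the right-hand side can itself be read as an $\ell_1$ sliced-Wasserstein distance on the bifiltration, i.e.\ one of the generalized metrics alluded to above.

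The second step is to invoke the hypothesis that $\Vect$ is a stable single-parameter vectorization. By definition (made precise in the \supp), this furnishes a constant $C_\Vect>0$ depending only on the map $\Vect$ and not on the particular diagrams, such that $d(\Vect(D_1),\Vect(D_2))\le C_\Vect\,\W_p(D_1,D_2)$ for every pair of diagrams $D_1,D_2$. Applying this to the $i$-th slice with $D_1=\PD(\wh{\CC}_i^1)$ and $D_2=\PD(\wh{\CC}_i^2)$ yields $d(\Vect(\CC^1_i),\Vect(\CC^2_i))\le C_\Vect\,\W_p(\PD(\wh{\CC}_i^1),\PD(\wh{\CC}_i^2))$ for each $i$.

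The final step is to sum these $m$ inequalities. Using $\mathfrak{D}(\M_\Vect(\CC^1),\M_\Vect(\CC^2))=\sum_{i=1}^m d(\Vect(\CC^1_i),\Vect(\CC^2_i))$ and pulling the common constant out of the sum gives exactly $\mathfrak{D}\le C_\Vect\sum_{i=1}^m \W_p(\PD(\wh{\CC}_i^1),\PD(\wh{\CC}_i^2))$, the claimed bound (with $\wh{C}_\Vect=C_\Vect$).

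The only genuinely delicate point, and hence the step I would scrutinize most, is the uniformity of the stability constant across slices. The argument works cleanly precisely because single-parameter stability provides a single $C_\Vect$ valid for all input diagrams; were the sharp constant to vary per slice, one would instead take $\max_i C_{\Vect,i}$ and check that this maximum is finite, which it is, since there are finitely many slices and each is stable. A secondary item to verify is that $\mathfrak{D}$ is genuinely a metric on the induced $2$-tensors so that the left side is well-posed; this follows from $d$ being a metric and a finite sum of row-wise metrics again being a metric. No compactness, barcode decomposability, or rank-invariant machinery is required, since the $\ell_1$ aggregation in both $\mathfrak{D}$ and the bound renders the estimate additive and collapses the whole statement to $m$ copies of the single-parameter guarantee.
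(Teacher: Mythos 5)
Your proposal is correct and follows essentially the same route as the paper's proof: apply the single-parameter stability bound row by row, note that the constant $C_\varphi$ is independent of the slice index, and sum the $m$ inequalities to obtain the $\ell_1$-aggregated bound. The uniformity of the constant that you flag as the delicate point is exactly the observation the paper makes explicitly, so the two arguments coincide.
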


As Perslay vectorizations are stable, we can use~\cref{thm:stability} to prove the stability of \name:
\begin{cor}[\name~is stable] \label{cor:cumperlay} Let $\VectMP$ represent \name~vectorization defined in~\cref{sec:cumper}. Let  $F, G:\CC\to \R^2$ be bifiltration functions on $\CC$ and $\wh{\CC}_F,\wh{\CC}_G$ be the induced bipersistence modules. Then, we have 
\begin{equation}
\mathfrak{D}(\VectMP(\wh{\CC}_F),\VectMP(\wh{\CC}_G))\leq \wh{C}_{\VectMP}\|F-G\|_\infty    
\end{equation}
\end{cor}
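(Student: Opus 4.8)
The plan is to obtain \cref{cor:cumperlay} as a direct consequence of \cref{thm:stability}, by first checking its hypothesis and then reducing the per-slice Wasserstein terms to $\|F-G\|_\infty$. Since the level-dependent PersLay vectorization $\VectSP$ of~\cref{eq:PersLay} is a stable SP vectorization (established in the \supp), $\VectMP$ is exactly the induced MP vectorization $\M_\Vect$ of a stable $\Vect$, so the hypothesis of \cref{thm:stability} holds. Instantiating that theorem with $\CC^1=\wh{\CC}_F$ and $\CC^2=\wh{\CC}_G$ yields
\begin{equation}
\mathfrak{D}(\VectMP(\wh{\CC}_F),\VectMP(\wh{\CC}_G))\leq C_{\VectMP}\sum_{i=1}^m \W_p(\PD(\wh{\CC}_{F,i}),\PD(\wh{\CC}_{G,i})),\nonumber
\end{equation}
where $\wh{\CC}_{F,i},\wh{\CC}_{G,i}$ denote the $i$-th horizontal slices induced by $F$ and $G$. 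It therefore suffices to bound $\sum_{i=1}^m \W_p(\PD(\wh{\CC}_{F,i}),\PD(\wh{\CC}_{G,i}))$ by a constant multiple of $\|F-G\|_\infty$ and fold the factor $m$ together with $C_{\VectMP}$ into the single constant $\wh{C}_{\VectMP}$.

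For a fixed slice $i$, I would invoke the classical single-parameter stability theorem for sublevel persistence together with its $p$-Wasserstein refinement: this bounds $\W_p(\PD(\wh{\CC}_{F,i}),\PD(\wh{\CC}_{G,i}))$ by the sup-norm distance between the two single-parameter filtration functions $f_i,g_i$ that the slice induces from $F$ and $G$. Because $\CC$ has finitely many cells, every diagram carries a uniformly bounded number of off-diagonal points, so the $p$-Wasserstein distance is controlled by the bottleneck distance up to a constant $C_\CC$ depending only on the fixed grid. This reduces the whole problem to comparing $f_i$ and $g_i$ on the common cell set of $\CC$.

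The step I expect to be the main obstacle is precisely this per-slice comparison, because \emph{aligned} slices need not be sup-norm close: a cube $Q$ with $F_1(Q)\leq i<G_1(Q)$ enters the $i$-th slice of $F$ but is absent from the $i$-th slice of $G$, so its cellwise discrepancy is not bounded by $\|F-G\|_\infty$ and it can, in isolation, open a long bar in only one diagram. The clean way to tame this is through the two-parameter interleaving distance: the coordinatewise bound $\|F-G\|_\infty\le\delta$ makes $\wh{\CC}_F$ and $\wh{\CC}_G$ $\delta$-interleaved as bipersistence modules, so slice $i$ of $F$ matches slice $i$ of $G$ up to a $\delta$-shift in the first coordinate, and the residual task is to bound the cost of this index misalignment. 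I would handle it by a counting argument over the fixed $M\times N$ grid—each cell flips slice-membership across only finitely many slices and perturbs each affected diagram by a bounded number of points—so the accumulated effect over all cells and slices is at most a grid-dependent constant times $\delta$; combining this with the two displayed reductions gives the claimed inequality with $\wh{C}_{\VectMP}$ absorbing $m$, $C_{\VectMP}$, $C_\CC$, and the flip-counting constant. Making this bookkeeping rigorous—ensuring that a short interleaving cannot produce a disproportionately long uncancelled bar and that the total stays linear in $\|F-G\|_\infty$—is the delicate part, and it is where $\wh{C}_{\VectMP}$ inherits its dependence on the grid dimensions $M$ and $N$.
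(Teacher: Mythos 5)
Your reduction to \cref{thm:stability} via the stability of the level-dependent PersLay vectorization is exactly the paper's first step, so the first half of your argument matches the intended proof. The divergence, and the gap, is in the per-slice bound. The paper does not re-derive it: it invokes \cite[Theorem A.2]{carriere2020multiparameter} to obtain, for each row $m_0$, the bottleneck bound $d_B(\PD(\wh{\CC}_F^{m_0}),\PD(\wh{\CC}_G^{m_0}))\leq \|F-G\|_\infty$ directly, then applies row-wise PersLay stability and sums the constants ($\wh{C}_{\VectMP}=\sum_m C_m$). You instead try to establish the per-slice comparison from scratch; you correctly identify the obstruction (a cube with $F_1(Q)\leq i<G_1(Q)$ belongs to the $i$-th slice of $F$ but not of $G$, so the induced single-parameter filtration functions are not sup-norm close), but your proposed fix does not close it.

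Concretely, the interleaving-plus-flip-counting argument cannot yield a bound linear in $\delta=\|F-G\|_\infty$. When a cell's first coordinate crosses the slice threshold $i$, the $i$-th sliced diagram of $F$ acquires or loses a point whose persistence is governed by the second filtration coordinate and the geometry of the complex, not by $\delta$; its distance to the diagonal is bounded below by a quantity independent of $\delta$. An arbitrarily small perturbation of $F_1$ across a slice threshold therefore changes $\sum_i\W_p(\PD(\wh{\CC}_{F,i}),\PD(\wh{\CC}_{G,i}))$ by an amount that does not tend to $0$ with $\delta$, so the accumulated cost is not ``a grid-dependent constant times $\delta$'' as you claim: your bound has an additive jump at $\delta=0^{+}$. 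This is the known degeneracy of axis-parallel slicing (fibered-barcode stability constants blow up as the slope of the slicing line tends to $0$ or $\infty$), and it is precisely what the paper's citation is used to sidestep. As written, your proof is missing the one lemma that actually carries the corollary; you would need to either import a sliced-stability result of that type or impose and exploit additional structure on how the slices are extracted from $F$ and $G$.
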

Our suppl. material provides the required proofs. %

\subsection{Topology Aware Learning via \name}
\label{sec:arch}
To learn multifiltration functions \emph{end-to-end}, we designed a Learnable Cubical MultiPersistence layer that can be integrated with existing deep learning networks, like common CNNs~\cite{he2016deep,szegedy2017inception,hu2019senet,liu2022convnext} and Vision Transformers~\cite{dosovitskiy2021vit,liu2022swin}. Our module consists of four main parts: (i) a multifiltration decoder, (ii) a differentiable GPU Cubical Multipersistence block, (iii) the differentiable \name~vectorization module, and (iv) a fusion block. 

\smallskip

\noindent \textbf{Multifiltration decoder.} \, The multifiltration decoder takes the input image or activations from any network layer and upsamples it using a convolutional network into the image resolution desired for persistent homology calculations. It then outputs a multifiltration in 2 dimensions using a thresholding block for enforcing the filtration structure for each row of the multifiltration and a regularization loss for enforcing the filtration structure for each column. The input to the multifiltration decoder consists of multi-channel $ \mathbf{X} \in \R^{C' \times H' \times W'} $ features, corresponding to the activations obtained from the neural network or the input image. For a MP grid of $M\times N$, the decoder outputs a $M$-channel image $ \mathbf{Z} = [\mathbf{Z}_1, \mathbf{Z}_2, \dotsc, \mathbf{Z}_M]^T \in [0,1]^{M \times H_s \times W_s} $ with $H_s, W_s$ image size desired for persistent homology process, with sigmoid activation at last layer and min-max normalization over all rows. Then, for the $s^\mathrm{th}$ row, a $N$-step \textbf{staircase function} with straight through gradients~\cite{bengio2013stgrad} is used for enforcing the multifiltration structure: 
\begin{equation}\label{eq:step-function}
    \mathbf{Z}'_s = \lfloor N \mathbf{Z}_s \rfloor \in \{0, 1, \dotsc, N\}^{H_s \times W_s}
\end{equation}
This function maps the continuous multifiltration decoder outputs to N+1 values corresponding to N values in each row in the MP grid, creating a single persistence structure per row. This process can also be viewed as a learnable alternative to sublevel filtration. We call  $ \mathbf{Z}'_s$ the compact multifiltration representation. The multifiltration $\{\CC_{s,t}\}$ can be obtained from this representation via $\{\CC_{s,t}\} = 1$ if $\mathbf{Z}'_s = t-1$, $0$ otherwise for $1 \leq t \leq N$ at each pixel location. We utilize a convolutional network with upsampling layers for the multifiltration decoder, as checkboard artifacts that can be seen in the deconvolution layers~\cite{odena2016deconvolution} can result in checkboard pattern artifacts in the multifiltration output, resulting in undesired persistence pairs. 
 
\smallskip

\noindent \textbf{Persistence and vectorization.} \, The compact multifiltration outputs are then utilized to obtain persistence pairs corresponding to each row in the multifiltration. Throughout this process, the gradients are backpropagated into pixels in the multifiltration that produce the corresponding birth and death values. Then, our differentiable \name~vectorization module produces learnable vectorizations using these persistence pairs. We use the power weighting function with learnable powers $\mathbf{w}_t$ and the triangle point transformation with learnable times $t_1,\dots,t_q\in\R$ in vectorization. We obtain the vectorizations as $H$-dimensional latent features $\mathbf{v}_s \in \R^H$ for each row $s$ in the multifiltration with the summation aggregation. These vectorizations are then concatenated into a single vector $\mathbf{v}$ corresponding to our MP vectorization in~\cref{fig:filtration}.

\begin{figure}[t] 
 \centering
   \includegraphics[width=\columnwidth]{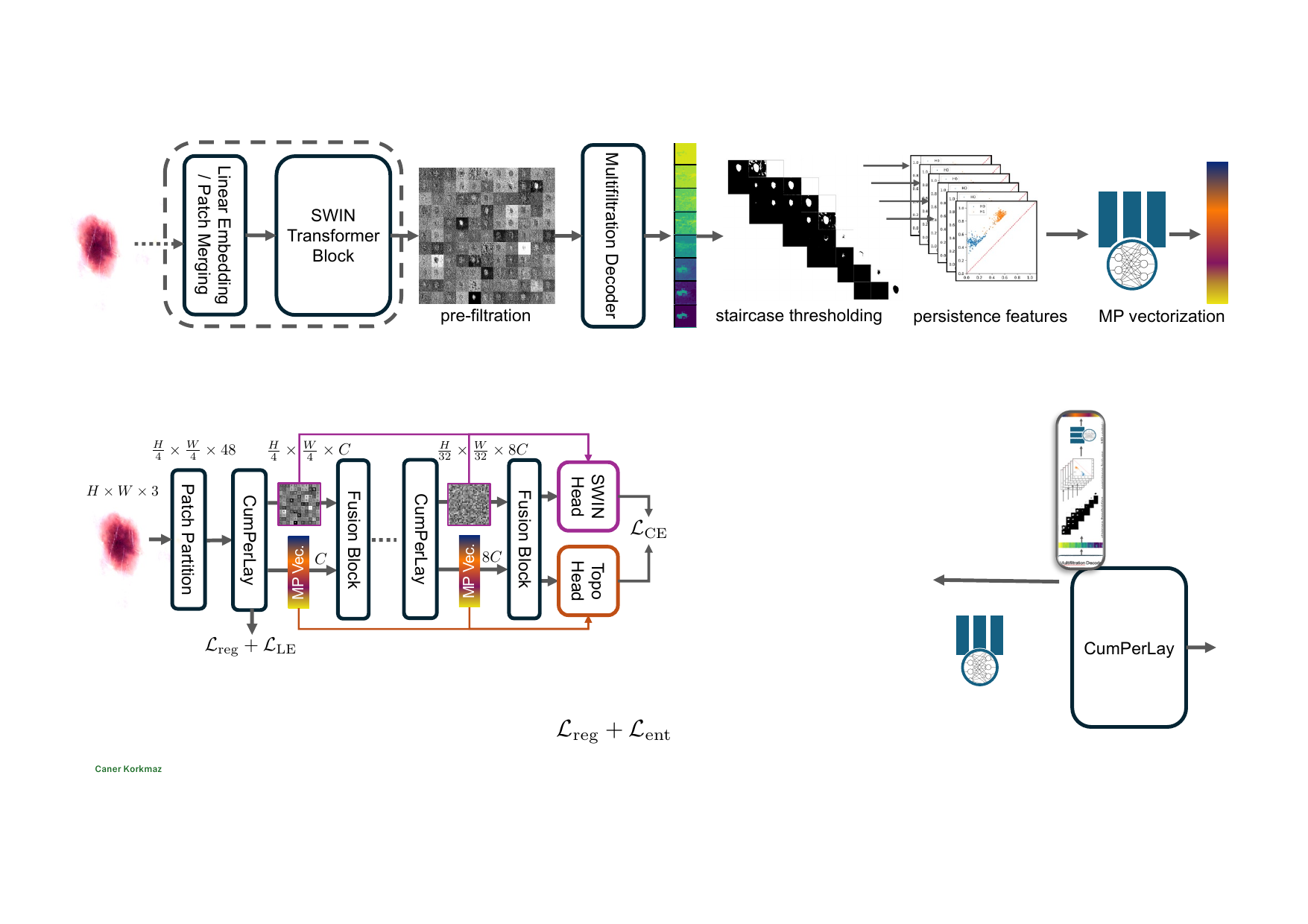}  
  \caption{{\bf TopoSwin}: Our \name~based topological Swin architecture.}
   \label{fig:toposwin}
   \vspace{-5mm}
\end{figure}

\smallskip

\noindent \textbf{Fusion block.} \, The fusion block recalibrates channel-wise activations from the original network with a residual gated linear unit~\cite{dauphin2017glu} with bias:
\vspace{-2mm}
\begin{equation}
\mathrm{FB}\left(\mathbf{X}, \mathbf{v}\right) = \mathbf{X} + \mathbf{X}\otimes \sigma\left(\mathbf{W}_1 \mathbf{v}\right) + \mathbf{W}_2 \mathbf{v},
\vspace{-2mm}
\end{equation}
where $\mathbf{X}$ is the original activations, $\mathbf{v}$ is the MP vectorization, $\mathbf{W}_i$ are the learnable weights, and $\sigma$ is the sigmoid function. We further concatenate the additional vectorization coming from a Learnable Cubical MultiPersistence layer on the input image to the vectorization $\mathbf{v}$ obtained in each layer.  

\smallskip

\noindent \textbf{Multifiltration regularization.} \quad We utilize a soft regularization loss to enforce the single filtration structure in each column. While the initial continuous output from the multifiltration decoder does not respect the bifiltration structure described before, \cref{eq:step-function} enforces this \emph{bifiltration-ness} on each row, similar to a sublevel set filtration.
The compact representations $ \mathbf{Z}'_s$ follow a structure induced by the bifiltration $\{\CC_{s,\cdot}\}$, ($ \mathbf{Z}'_s \geq  \mathbf{Z}'_{s+1}$ for $1 \leq s \leq M-1$). Our regularization loss is then formulated as:
\vspace{-2mm}
\begin{equation}
    \mathcal{L}_{\mathrm{reg}} = \sum_{s=1}^{M-1} \max(0, \mathbf{Z}'_{s+1}- \mathbf{Z}'_s)
\end{equation}
\vspace{-4mm}

This regularization guides the output of our multifiltration decoders to produce a single filtration structure on the columns. Together with the original row-wise single filtration, this ensures a multifiltration structure on our 2-parameter learned filtration. Following PHG-Net~\cite{peng2024phg}, we add a topological classification head as a two-layer MLP with a cross-entropy (CE) loss $\mathcal{L}_{\mathrm{Topo}}$ to guide the multifiltration decoders and vectorization layers, in addition to the normal cross-entropy loss $\mathcal{L}_{\mathrm{CE}}$. We use a negative local image entropy~\cite{ghosh2023localintrinsicdimensionalentropy} loss on the compact filtration outputs $\mathcal{L}_{\mathrm{LE}}$ to increase locality, promote early exploration of multifiltrations, and facilitate the training of the learnable cubical MP blocks.
The multifiltration decoder models the inverse filtration functions $f_{\CC}^{-1}$, and can be learned together with the vectorization module and the fusion block by utilizing the differentiability of the persistent homology and the vectorization steps. The layer can be repeated and used after any block in the original network (e.g. after each Swin Transformer Block in Swin V2~\cite{liu2022swin}), and gradients can be backpropagated to the network.  We give sample learned bifiltration examples in~\cref{fig:bifiltrations} and further examples in the suppl. material.

\vspace{-2mm}

\section{Experiments}
\subsection{Setup}

\noindent \textbf{Datasets.} \quad
We use three publicly available medical imaging datasets and one computer vision dataset. 
The first dataset \textbf{ISIC 2018} dataset consists of dermoscopic images of skin lesions, curated to facilitate automated melanoma classification~\cite{codella2019skin}. The dataset comprises 10,015 images spanning seven diagnostic categories. This dataset is part of the ISIC Challenge series and serves as a benchmark for skin lesion classification models. We use the official split for train, validation, and test subsets.
The second dataset \textbf{CBIS-DDSM} is a large-scale mammogram dataset for breast cancer diagnosis~\cite{lee2017curated}. It contains 3568 mammogram images, with 2111 benign and 1457 malignant cases. We utilize the official train and test split with a further 80:20 train/validation split. The third dataset \textbf{Glaucoma} is from publicly available \textit{Eye Disease Image Dataset (Eye-DID)}, a comprehensive collection of color fundus images for retinal disease classification~\cite{sharmin2024dataset,rashid2024eye}. Specifically, we utilized 1024 healthy and 1349 glaucoma fundus images for binary classification. The original 2004 × 1690 images were resized to 224×224 for computational efficiency. We used an 80:20 training/test split in this dataset, with an 80:20 split on the training portion for the validation set.
Our fourth dataset \textbf{PASCAL VOC 2012}~\cite{Everingham15} is a benchmark dataset for segmentation. We use the Semantic Boundaries~\cite{hariharan2011semantic} (SB) augmented version of the VOC dataset (excluding any VOC validation images). We use a 90:10 split on the training portion (7087 images) for the validation set. We utilize the VOC validation split as our test set (1449 images) and resize all images to 224×224.

\smallskip

\noindent \textbf{Topological baselines.} \quad To compare the standalone performance of topological models, we extract topological vectors with standard filtration functions (see suppl. material) through Multipersistence (MP) and Single Persistence (SP).

\noindent \textit{SP vectors:} We compute Betti curves for each homology group using 100 bins, resulting in 100-dimensional vectors per homology group. The concatenation of features from both homology groups is directly used for classification.

\noindent\textit{MP vectors:} For color images, we apply 10 thresholds per channel (Red, Green, and Blue), obtaining a persistence representation of shape (10 $\times$ 10 $\times$ 10) for each homology group, with two homology groups. For grayscale images, we apply 50 thresholds for grayscale and 10 erosion filtration levels [0, 1, 2, 3, 5, 7, 9, 12, 15, 20], producing an array of shape (10, 50) per homology group. These features are flattened, and the top 300 features are selected using an XGBoost~\cite{chen2016xgboost}-based feature importance ranking.

\noindent \textit{MLP Classifier:} The extracted features from both MP and SP models are used as input to a Multi-Layer Perceptron (MLP) for classification. The network consists of two hidden layers with ReLU activation and 0.3 dropout. The model is trained using the Adam optimizer with cross-entropy loss.

\begin{figure}[t] 
 \centering
     	\includegraphics[width=\linewidth]{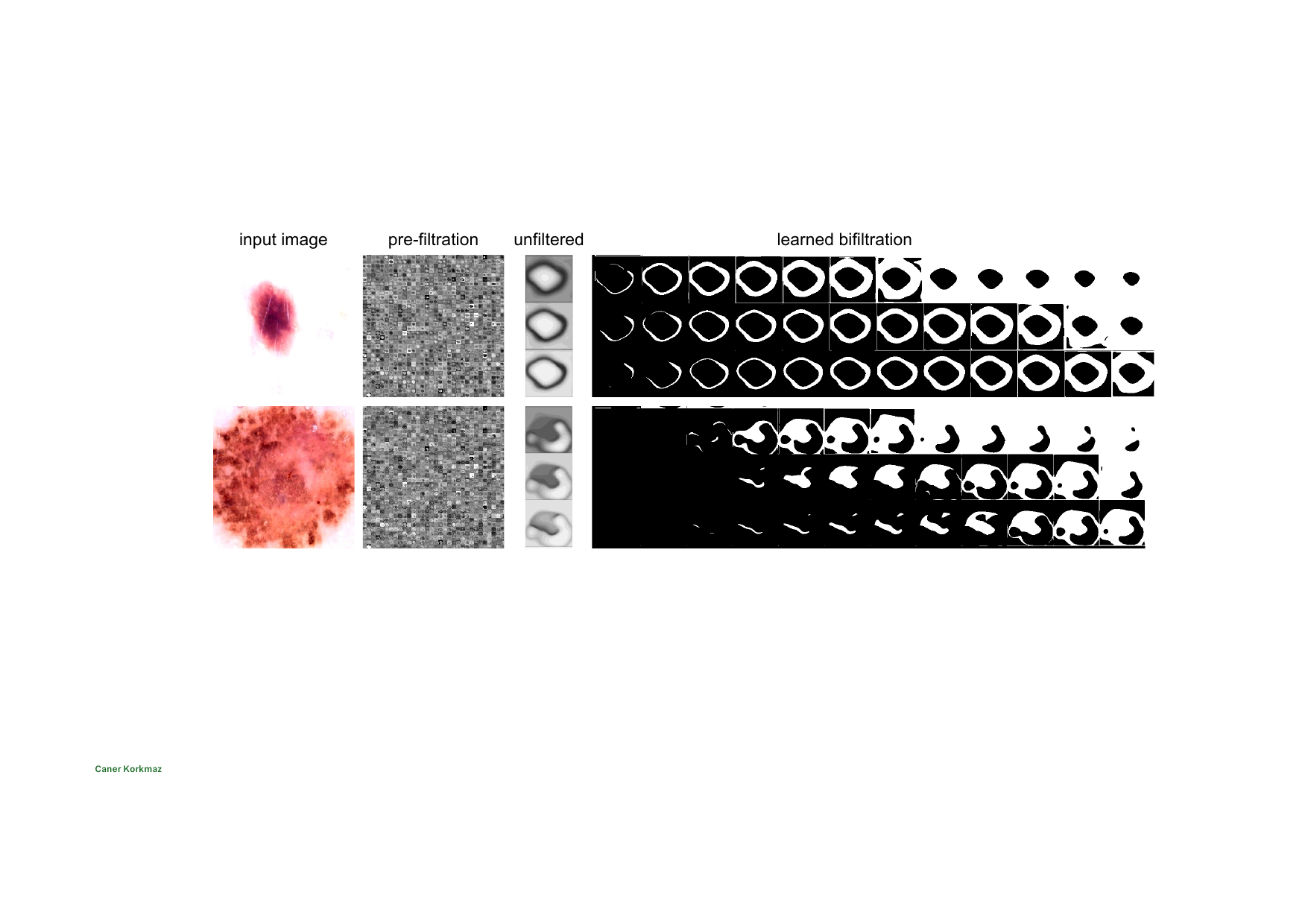}  
  \caption{\textbf{Filtration learning.} Some of the compact multifiltration representations and the corresponding bifiltrations from the third layer of our network on ISIC dataset.}
   \label{fig:bifiltrations}
   \vspace{-5mm}
\end{figure}

\paragraph{Implementation details of our TopoSwin.} \quad 
We use the SwinV2-B~\cite{liu2022swin} variant as our base model and utilize one learnable cubical multipersistence module after each Swin transformer block, with an additional module on the input images. We use 8x16 MP grids in the multifiltration decoders, with 112x112 image size for the filtration image sizes, except for the module on input images that uses 224x224.
We used 100 learnable sample locations for the triangle transformation function in our vectorization module. The base model in the Swin and TopoSwin variants is always initialized with pre-trained weights trained on the ImageNet-1K~\cite{deng2009imagenet} dataset. We do not back-propagate gradients from multifiltration decoders back into the base network when utilizing pre-trained weights.  We denote the single persistence variant of our model as TopoSwin-SP (TS-SP) and the multipersistence variant of our model as TopoSwin-MP (TS-MP).

We utilize AdamW~\cite{loshchilov2019adamw} optimizer with a learning rate of \(10^{-4}\) and weight decay of 0.01 for training our models. We do not use learning rate scheduling, use only random flip augmentations, and train for 100 epochs based on our hyperparameter tuning on the Swin baseline. We use 1.0 as $\mathcal{L}_{\text{CE}}$ weight, 0.25 as $\mathcal{L}_{\text{topo}}$ weight, and 0.01 as the $\mathcal{L}_{\text{reg}}$ weight, and \(10^{-4}\) as the $\mathcal{L}_{\text{LE}}$ weight. 

For our CUDA cubical multipersistence implementation, we implement a union-find based cubical persistence algorithm~\cite{wagner2011efficient} to efficiently find 0\textsuperscript{th} dimensional homology. We utilize the duality results between top-dimensional cell and vertex constructions of persistent homology~\cite{Bleile_2022} to calculate the 1\textsuperscript{st} dimensional homology of 2D images by utilizing the same union-find algorithm on the dual construction. Our implementation is influenced by the CPU implementation of cubical single persistence CubicalRipser~\cite{kaji2020cubicalripsersoftwarecomputing}.

\begin{table}[t]
\caption{\footnotesize \textbf{Classification results.} Performance comparison of topological and hybrid topological models on benchmark datasets. ISIC involves 7-way classification, while the others are binary classification.}
\label{tab:main-results}
\resizebox{\columnwidth}{!}{
\begin{tabular}{@{}lll|ll|ll@{}}
\toprule
\multirow{2}{*}{\textbf{Model}} & \multicolumn{2}{c}{\textbf{Glaucoma}} & \multicolumn{2}{c}{\textbf{DDSM}} & \multicolumn{2}{c}{\textbf{ISIC}} \\ 
\cmidrule(l){2-3} \cmidrule(l){4-5} \cmidrule(l){6-7} 
      & {\bf Acc}      & {\bf AUC}      & {\bf Acc}    & {\bf AUC}    & {\bf Acc}    & {\bf AUC}    \\ \midrule
{SP} & 74.32  & 81.48  &      61.65&   61.73   &          62.30       &     76.42 \\
{MP}     & 76.42 & 83.22 &     63.07 & 64.38&   63.29   &         81.16        \\
\midrule
{Swin~\cite{liu2022swin}}   & 82.73  & 92.24  & 66.76& 74.17& \underline{84.39}& 95.54\\
{PHG-Net~\cite{peng2024phg}}& 81.26  & 90.32  & 58.24& 61.07& 72.69& 89.62\\
\midrule
{TopoSwin-Betti}& \underline{83.58}  & 92.31  &   66.76   &        72.58         &  \textbf{85.01}         &  96.06    \\
{TopoSwin-SP}        & 82.95  & \textbf{92.90}   & \textbf{68.47}& \underline{75.60} & 82.34& \underline{96.12}\\
{TopoSwin-MP} & \textbf{84.63}  & \underline{92.87}  & \underline{68.32}& \textbf{76.32}& 83.47& \textbf{96.50} \\ \bottomrule
\end{tabular}
}
\vspace{-5mm}
\end{table}

\subsection{Results}
\paragraph{Medical datasets} \quad \cref{tab:main-results} presents our experimental results on the three independent medical datasets. The first two rows showcase the standalone performance of SP and MP vectors using traditional filtrations. Notably, MP consistently outperforms SP, highlighting that MP generates more informative feature vectors for image classification tasks, as expected. The third row reports the baseline deep learning (DL) model Swin Transformer V2~\cite{liu2022swin}. In the fourth row, we report PHG-Net~\cite{peng2024phg}, a hybrid topological model that integrates standard SP vectors with Swin V2. To ensure consistency, we evaluate these models using official/our data splits, with PHG-Net setup following their public code.

Finally, we introduce TopoSwin-Betti, our model incorporating MP Betti vectors with traditional filtrations. Our results indicate that the proposed hybrid models consistently enhance the performance of the Swin baseline while outperforming PHG-Net. The key distinctions between PHG-Net and our approach lie in feature extraction and learnable filtrations. While PHG-Net employs PointNet to process persistence diagrams derived from SP filtrations, our models utilize PersLay-based vectorizations with learnable filtrations.

We observe that TopoSwin-MP obtains best accuracy on Glaucoma with second-to-best accuracy on DDSM, with slightly better results from TopoSwin-SP. In addition, it obtains the best AUC on DDSM and ISIC with slightly lower AUC than TopoSwin-SP on Glaucoma. The results indicate consistent performance improvements of TopoSwin over baseline Swin model, with MP outperforming SP in many cases. In addition, we observe that TopoSwin-Betti outperforms Swin in ISIC and Glaucoma, while it underperforms in DDSM. This indicates while the standard MP features are helpful in DL scenarios, the end-to-end learnable MP features consistently provide better improvements.

\paragraph{Computer vision dataset} \quad \cref{tab:seg-results} presents our segmentation results on the VOC 2012 dataset.
\setlength{\columnsep}{6pt}
\begin{wraptable}{r}{1.8in}
\vspace{-.15in}
\caption{\footnotesize {\bf Segmentation results.} mIoU and ACC results for different baseline networks and their Topo-SP/MP variants.\vspace{-3mm}}
\label{tab:seg-results}
\setlength{\columnsep}{3pt}%
\centering
\resizebox{\linewidth}{!}{
\begin{tabular}{@{}lll|ll@{}}
\toprule
\multirow{2}{*}{\textbf{Variant}} & \multicolumn{2}{c}{\textbf{FCN}~\cite{long2015fully}} & \multicolumn{2}{c}{\textbf{U-Net}~\cite{ronneberger2015unet}} \\ 
\cmidrule(l){2-3} \cmidrule(l){4-5}
          & {\bf mIoU}    & {\bf ACC}    & {\bf mIoU}    & {\bf ACC}    \\ \midrule
{Baseline}  & 38.73 &   \underline{80.35}   &          23.96       &     78.51 \\
\midrule
{Topo-SP}  & \underline{39.25}& 80.23& \underline{26.02}& \underline{80.20}\\
{Topo-MP}& \textbf{40.36} & \textbf{80.59}& \textbf{30.42}& \textbf{81.43}\\
\bottomrule
\end{tabular}
}
\vspace{-.15in}
\end{wraptable}
We consider: (i) a Fully Convolutional Network (FCN)~\cite{long2015fully} with a ResNet-50~\cite{he2016deep} backbone pretrained on ImageNet, and (ii) a U-Net~\cite{long2015fully} trained from scratch. For Topo-SP/MP variants, we follow the same persistence setup from TopoSwin for the FCN with the ResNet-backbone, and for U-net, the same persistence modules are placed over the skip connections. In both settings, while Topo-SP outperforms the baseline, our cubical MP variant delivers consistent and larger improvements in both mIoU and Acc.

\paragraph{Performance on limited data} We hypothesize that the robustness of topological vectors can serve as effective anchors, enhancing deep learning (DL) models' ability to learn in data-scarce settings. Our experiments in limited data scenarios support this intuition, as shown in Fig. \ref{fig:limited} and suppl. material. We used the same test sets across all datasets, systematically reducing the training data to 5\%, 10\%, 20\%, 50\%, and 100\% of the original training data.
While performance variations exist across datasets, our hybrid models exhibit more stable trends as training data decreases. Overall, topological vectors do not degrade performance; instead, they contribute to more robust learning and yield performance improvements in several data-limited settings.

\begin{wraptable}{r}{1.8in}
\vspace{-.15in}
\caption{\footnotesize {\bf Ablation Study.} AUC results for different variations of our TopoSwin (TS) model.\vspace{-3mm}}
\label{tab:ablation}
\setlength{\columnsep}{3pt}%

\centering
\resizebox{\linewidth}{!}{
\begin{tabular}{@{}lcc@{}}
\toprule
{\textbf{Model}}         & \textbf{Glaucoma}& \textbf{ISIC} \\ \midrule
{Swin~\cite{liu2022swin}}& 92.24 & 95.54       \\
{TS-MP - SPL/NEnp} &     92.07 & 95.57     \\
{TS-SP - SPL} & 92.77 &	95.96\\
{TS-MP - SPL} & 92.33 &	95.60\\
{TS-Betti }       & 	92.31 & 96.06   \\
{TS-SP }    &  \textbf{92.90}  & \underline{96.12}  \\
{TS-MP}     & \underline{92.87} & \textbf{96.50}    \\
 \bottomrule
\end{tabular}}
\vspace{-.15in}
\end{wraptable}
\noindent \textbf{Ablation study.} \quad In our ablation study,  
 we assessed performance improvements by comparing TopoSwin-SP and TopoSwin-MP variants without
 \name~vectorization and image entropy loss, as well as the TopoSwin-Betti variant (\Cref{tab:ablation}). In the table, SPL denotes SP PersLay vectorization, and NEnp indicates no image entropy loss. 
  Our vectorization outperforms SP PersLay, while removing image entropy loss hinders performance,
 suggesting reduced filtration learning. These results confirm that both image entropy and our vectorization enhance Swin model performance, with learnable vectorization surpassing standard filtrations.
\begin{wraptable}{r}{1.8in}
\vspace{-.15in}
\caption{\footnotesize {\bf Comparison to persistence landscapes.} AUC results with different vectorizations. \vspace{-3mm}}
\label{tab:ablation-landscape}
\setlength{\columnsep}{3pt}%
\centering

\resizebox{\linewidth}{!}{
\begin{tabular}{@{}lcc@{}}
\toprule
\textbf{Model} & \textbf{DDSM} & \textbf{ISIC} \\ 
\midrule
{Swin~\cite{liu2022swin}}  & 74.17& 95.54\\
{TS-SP Land.}&  74.15&  95.91\\
{TS-MP Land.}&  \underline{75.60}& 95.75 \\
{TS-SP}        &   \underline{75.60} & \underline{96.12}\\
{TS-MP}   & \textbf{76.32}& \textbf{96.50} \\ 
\bottomrule
\end{tabular}
}
\vspace{-.15in}
\end{wraptable}

We also compare our PersLay-based MP vectorization with single and multi-parameter persistent landscapes~\cite{bubenik2012statistical}.
 Our PersLay-based MP vectorization subsumes many popular vectorization methods such as persistence images, silhouettes, landscapes, and kernels, and outperforms persistence landscapes (Land.) in both DDSM and ISIC in terms of AUC, tabulated on the right (\cref{tab:ablation-landscape}).
 
\begin{figure}[t]
  \centering
  \includegraphics[width=\columnwidth]{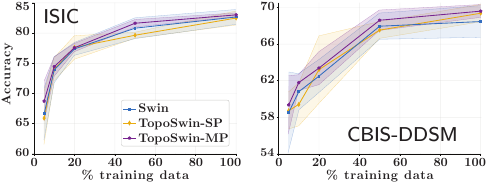}\vspace{-2mm}
  \caption{Performance comparison of Swin, TS-SP, and TS-MP models on \textbf{limited data} over 4 seeds. The $x$-axis indicates the percentage of training data used, while the test set remains unchanged.}
  \label{fig:limited}
  \vspace{-.25in}
\end{figure}

\paragraph{Runtime} \cref{tab:cpu-gpu-comparison} provides execution times (sec.) for $16\times32$ cubical MPH on $224\times 224$ images, over 300 iterations with a batch size of 8. Our GPU version (on RTX 4090) is significantly faster than Gudhi~\cite{gudhi20} implementation and its parallelization over 24 processes (on Intel i9-12900K).
\begin{table}[h!]
\vspace{-3mm}
\caption{\footnotesize {\bf Runtime comparison}. Gudhi~\cite{gudhi20} backend is used for CPU versions with \emph{CPU (Gudhi) -- 24} parallelized over 24 processes.\vspace{-3mm}}
\label{tab:cpu-gpu-comparison}
\centering
\setlength{\tabcolsep}{7pt}
\resizebox{\columnwidth}{!}{
\begin{tabular}{@{}lcc|c@{}}
\toprule
& \textbf{CPU (Gudhi~\cite{gudhi20})} & \textbf{CPU (Gudhi) -- 24} & \textbf{GPU (Ours)} \\ 
\midrule
Runtime (sec) & 41.76 $\pm$ 0.08 & 5.20 $\pm$ 1.04 & 0.57 $\pm$ 0.02 \\
\bottomrule
\end{tabular}
}
\vspace{-5mm}
\end{table}

\section{Conclusion}
\label{sec:conclusion}

We introduced \name, a novel differentiable vectorization layer for Cubical Multiparameter Persistence (CMP), effectively bridging TDA and DL, thanks to the end-to-end learnable multiparameter filtrations. By leveraging learnable MP with stable MP vectorization, we capture richer structural representations than SP and standard MP, improving feature extraction and enhancing downstream performance in SOTA DL models, particularly Swin Transformers. 

\paragraph{Limitations \& future work}
While our approach demonstrates significant promise, further research is warranted to learn optimal bifiltrations and extend the framework to higher-dimensions and more diverse datasets. We are hopeful that our work will be helpful in improving topological machine learning tool-set~\cite{hajij2022topological,papamarkou2024position,hajij2025copresheaf,hajij2023combinatorial}. 

\clearpage

\section*{Acknowledgements}
T. Birdal acknowledges support from the Engineering and Physical Sciences Research Council [grant EP/X011364/1]. T. Birdal was supported by a UKRI Future Leaders Fellowship [grant MR/Y018818/1] as well as a Royal Society Research Grant [RG/R1/241402]. This work was partially supported by National Science Foundation under grants DMS-2220613, and DMS-2229417. The authors acknowledge the \href{http://www.tacc.utexas.edu}{Texas Advanced Computing Center} (TACC) at  UT Austin for providing computational resources that have contributed to the research results reported within this paper. 
 
{\small
\bibliographystyle{ieeenat_fullname}

}

\clearpage

\appendix 
\section*{Appendix}

\section{Stability Results}

\subsection{Stability of Single Persistence Vectorizations} \label{sec:stability2}

In machine learning, ensuring the stability of a particular persistence diagram (PD) vectorization is very crucial, as stability examines whether small changes in the PD result in significant alterations in the vectorization. To address this question, we need clear definitions of what constitutes "small" and "large" changes, which in turn requires a way to measure distance within the space of persistence diagrams. The most widely used metric for this purpose is the Wasserstein distance, also known as the matching distance.

Consider two images, $\X^+$ and $\X^-$, with their associated persistence diagrams $\PD(\X^+)$ and $\PD(\X^-)$, respectively (omitting dimension labels). Each diagram contains points $\{q_j^+\} \cup \Delta^+$ and $\{q_l^-\} \cup \Delta^-$, where $\Delta^\pm$ represents the diagonal (which stands for trivial cycles) with infinite multiplicity. Here, $q_j^+ = (b_j^+, d_j^+)$ in $\PD(\X^+)$ denotes the birth and death times of a feature $\sigma_j$ in $\X^+$. Let $\phi: \PD(\X^+) \to \PD(\X^-)$ be a bijection, allowing mappings even when the cardinalities $|\{q_j^+\}|$ and $|\{q_l^-\}|$ differ. The $p$-th Wasserstein distance, denoted $\W_p$, is formally defined as follows:
\begin{equation}
\resizebox{\linewidth}{!}{
$\W_p(\PD(\X^+),\PD(\X^-))= \min_{\phi}\biggl(\sum_j\|q_j^+-\phi(q_j^+)\|_\infty^p\biggr)^\frac{1}{p}, \quad p\in \Z^+$.
}
\end{equation}

Next, a vectorization, denoted as $\varphi(\PD(\X))$, is called to be \textit{stable} if the distance between its outputs for two images, satisfies the inequality 

\smallskip

\noindent \resizebox{\linewidth}{!}{$\mathrm{d}(\varphi(\PD(\X^+)),\varphi(\PD(\X^-)))\leq C\cdot \W_p(\PD(\X^+),\PD(\X^-))$}

\smallskip

\noindent where $\mathrm{d}(.\ ,\ .)$ represents a suitable metric on the space of vectorizations. The constant $C>0$ is independent of the images $\X^\pm$. This stability inequality interprets that changes in vectorizations are bounded by changes in persistence diagrams. Essentially, two nearby persistence diagrams correspond to nearby vectorizations. If a particular vectorization $\varphi$ satisfy such a stability condition, we call them \textit{stable vectorization}~\cite{atienza2020stability}. Notable examples of stable vectorizations include Persistence Landscapes~\cite{Bubenik:2015}, Silhouettes~\cite{chazal2014stochastic}, Persistence Images~\cite{adams2017persistence}, Stabilized Betti Curves~\cite{johnson2021instability}, and various Persistence curves~\cite{chung2019persistence}.

\subsection{Stability of MultiPersistence Vectorizations}  \label{sec:stability3}

In this paper, because of technical problems in defining a multipersistence output like persistence diagrams (\Cref{sec:MPtheory}), we directly moved the vectorization of the multifiltrations by a well-known method called \textit{slicing}. In particular, assuming we have 2-parameter filtration $\{\X_{i,j}\}$, in the multipersistence grid $\G$ of size $m\times n$, for each fixed $i_0$, we get single filtration $\wh{\X}_{i_0}$, $\X_{i_01}\subset \X_{i_02}\subset \dots\subset \X_{i_0n}=\X_{i_0}$, which can be considered as \textit{horizontal slicing} of the multipersistence grid. Then, applying vectorizations above to the single filtrations $\{\wh{\X}_i\}_1^{m}$, we get $m$ separate vectors $\vec{\varphi}_i=\varphi(\PD(\wh{\X}_i))$. Then, collecting these vectors into a 2-tensor as rows, we obtain a 2-tensor $\M_\varphi$ which we call the induced MP vectorization of $\varphi$.
For example, when $\varphi$ is Betti vectorization, $\M_\varphi$ is simply correspond to multigraded Betti numbers~\cite{oudot2024stability}, which are simply $m\times n$ matrices for each Betti dimension. We now show that when the source single parameter (SP) vectorization $\varphi$ is stable, then so is its induced MP vectorization $\M_\varphi$.

Let $\X^+$ and $\X^-$ be two images of size $r\times s$. With the notation in \Cref{sec:stability2}, let $\varphi$ be a stable SP vectorization with the stability equation 
\begin{equation}\label{eqn1}
 \resizebox{.85\linewidth}{!}{$\mathrm{d}(\varphi(\X^+),\varphi(\X^-))\leq C_\varphi\cdot \W_{p_\varphi}(\PD(\X^+),\PD(\X^-))$}
\end{equation}
for some $1\leq p_\varphi\leq \infty$. Note that for many common stable vectorization $\varphi$,  $d(\cdot,\cdot)$ is taken as $l^{p_\varphi}$ metric~\cite{skraba2020wasserstein}. However, to keep the generality, we are not specifying it here.

Now, we consider the define these multiple single persistence diagrams $\{\PD(\wh{\X}_i^\pm)\}$ as output for the multipersistence grid, and define a natural matching distance between as the sum of the corresponding distances for each row.
\begin{equation}\label{eqn2}
 \resizebox{.85\linewidth}{!}{$\mathbf{D}_p(\{\PD(\wh{\X}_i^+)\},\{\PD(\wh{\X}_i^-)\})\\=\sum_{i=1}^m\W_p(\PD(\wh{\X}^+_i), \PD(\wh{\X}^-_i))$}.
 \end{equation} 

Now, we define \textit{the distance between induced MP vectorizations} as 
\begin{equation}\label{eqn3}
\mathfrak{D}(\M_\varphi(\X^+),\M_\varphi(\X^-))=\sum_{i=1}^m \mathrm{d}(\varphi(\X^+_i),\varphi(\X^-_i))
\end{equation}
where $p\geq 1$.
\begin{theorem} \label{thm:stability-sup}
Let $\varphi$ be a stable SP vectorization. Then, the induced MP Vectorization $\M_\varphi$ is also stable, i.e., with the notation above, there exists $\wh{C}_\varphi>0$ such that for any pair of images $\X^+$ and $\X^-$, we have the following inequality.
\begin{align*}
\resizebox{\linewidth}{!}{$\mathfrak{D}(\M_\varphi(\X^+),\M_\varphi(\X^-))\leq \wh{C}_\varphi\cdot \mathbf{D}_{p_\varphi}(\{\PD(\wh{\X}^+)\},\{\PD(\wh{\X}^-)\})$}
\end{align*}
\end{theorem}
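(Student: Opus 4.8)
The plan is to reduce the multiparameter statement to the single-parameter stability hypothesis one horizontal slice at a time and then sum. The crucial observation is that both the target distance $\mathfrak{D}$ in \eqref{eqn3} and the source distance $\mathbf{D}_{p_\varphi}$ in \eqref{eqn2} are defined as \emph{sums over the $m$ rows} of the multipersistence grid. Hence the inequality to be established decomposes termwise, and it suffices to control each summand $\mathrm{d}(\varphi(\X^+_i),\varphi(\X^-_i))$ by the corresponding Wasserstein term $\W_{p_\varphi}(\PD(\wh{\X}^+_i),\PD(\wh{\X}^-_i))$.

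First I would fix a row index $i\in\{1,\dots,m\}$ and recall that horizontal slicing produces a genuine single-parameter filtration $\wh{\X}_i$, namely the nested sequence $\X_{i1}\subset\X_{i2}\subset\cdots\subset\X_{in}$. This is exactly the setting in which the stability hypothesis \eqref{eqn1} applies, so I can invoke it directly for the pair $(\wh{\X}^+_i,\wh{\X}^-_i)$ to obtain $\mathrm{d}(\varphi(\X^+_i),\varphi(\X^-_i))\leq C_\varphi\cdot \W_{p_\varphi}(\PD(\wh{\X}^+_i),\PD(\wh{\X}^-_i))$. The constant $C_\varphi$ is independent of the images by the definition of a stable vectorization, so the \emph{same} constant is valid for every row.

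Next I would sum these $m$ slice-wise inequalities. Using the definitions of $\mathfrak{D}$ and $\mathbf{D}_{p_\varphi}$ as row-sums, the left-hand side collapses to $\mathfrak{D}(\M_\varphi(\X^+),\M_\varphi(\X^-))$ and the right-hand side to $C_\varphi\cdot \mathbf{D}_{p_\varphi}(\{\PD(\wh{\X}^+)\},\{\PD(\wh{\X}^-)\})$, which yields the claim with $\wh{C}_\varphi=C_\varphi$.

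The argument is essentially bookkeeping, so I do not expect a genuine analytic obstacle; the work lies in arranging the definitions to line up. The one point requiring care — and where a naive argument could slip — is the \emph{uniformity of the constant}: one must confirm that a single $C_\varphi$, rather than a row-dependent $C_\varphi^{(i)}$, governs all slices simultaneously, which holds precisely because the stability constant in \eqref{eqn1} is image-independent. Had the induced distances instead been measured by a nonlinear aggregation of the rows (for instance an $\ell^q$ norm of the vector of per-row distances rather than a plain sum), the termwise reduction would no longer be immediate and one would additionally need a norm-monotonicity or power-mean estimate; the sum-based definitions adopted here avoid that complication entirely.
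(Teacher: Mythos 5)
Your proof is correct and follows essentially the same route as the paper: apply the single-parameter stability bound \eqref{eqn1} slice by slice, note that the constant $C_\varphi$ is independent of the row index, and sum over the $m$ rows so that the definitions \eqref{eqn2} and \eqref{eqn3} of $\mathbf{D}_{p_\varphi}$ and $\mathfrak{D}$ make the two sides match, yielding $\wh{C}_\varphi = C_\varphi$. Your closing remark about the uniformity of the constant is exactly the point the paper also flags explicitly.
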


\begin{proof} As $\varphi$ is a stable SP vectorization, for any $1\leq i\leq m$, we have $\mathrm{d}(\varphi(\X_i^+),\varphi(\X_i^-))\leq C_\varphi\cdot \W_{p_\varphi}(\PD(\X_i^+),\PD(\X_i^-))$ for some $C_\varphi>0$ by \cref{eqn1}, where 
$\W_{p_\varphi}$ is Wasserstein-$p$ distance.
Notice that the constant $C_\varphi>0$ is independent of $i$. Hence, 
\begin{align*}
\resizebox{0.4\linewidth}{!}{$\mathfrak{D}(\M_\varphi(\X^+),\M_\varphi(\X^-))$} & =  & \resizebox{0.41\linewidth}{!}{$\sum_{i=1}^m \mathrm{d}(\varphi(\X^+_i),\varphi(\X^-_i))$}\hspace{1cm}\\
\; & \leq& \resizebox{0.53\linewidth}{!}{$\sum_{i=1}^m C_\varphi\cdot \W_{p_\varphi}(\PD(\wh{\X}_i^+),\PD(\wh{\X}_i^-))$} \\
\; & =  &  \resizebox{0.53\linewidth}{!}{$C_\varphi \sum_{i=1}^m \W_{p_\varphi}(\PD(\wh{\X}_i^+),\PD(\wh{\X}_i^-))$} \\
\; &  =   & \resizebox{0.53\linewidth}{!}{$C_\varphi\cdot \mathbf{D}_{p_\varphi}(\{\PD(\wh{\X}_i^+)\},\{\PD(\wh{\X}_i^-)\})$} 
\end{align*}
where the first and last equalities are due to~\cref{eqn2} and~\cref{eqn3}, while the inequality follows from~\cref{eqn1} which is true for any $i$.   
This concludes the proof of the theorem.
\end{proof}

\begin{cor}[CumPerLay is stable] 
Let $\VectMP$ represent CumPerLay vectorization as defined in Section 4.1. Let $\CC$ be a cubical complex and let  $F, G:\CC\to \R^2$ be bifiltration functions. Let $\wh{\CC}_F,\wh{\CC}_G$ be the induced bipersistence modules. Then, we have 
$$\mathfrak{D}(\VectMP(\wh{\CC}_F),\VectMP(\wh{\CC}_G))\leq \wh{C}_{\VectMP}\|F-G\|_\infty$$
\end{cor}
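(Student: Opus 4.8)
The plan is to chain two stability results: the induced-MP-vectorization stability of \cref{thm:stability-sup} (already established for any stable SP vectorization $\varphi$) and the classical Wasserstein stability of single-parameter cubical persistence. Since the PersLay vectorization $\VectSP$ underlying $\VectMP$ is stable (as recorded in \cref{sec:stability2}), \cref{thm:stability-sup} applies verbatim with $\varphi = \VectSP$, producing a constant $\wh{C}_{\VectSP}$ and the intermediate bound
\begin{equation*}
\mathfrak{D}(\VectMP(\wh{\CC}_F),\VectMP(\wh{\CC}_G)) \le \wh{C}_{\VectSP}\sum_{i=1}^m \W_{p}\bigl(\PD(\wh{\CC}_{F,i}),\PD(\wh{\CC}_{G,i})\bigr),
\end{equation*}
where $\wh{\CC}_{F,i}$ and $\wh{\CC}_{G,i}$ denote the $i$-th horizontal slices of the two bipersistence modules. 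It therefore remains only to control each per-slice Wasserstein term by $\|F-G\|_\infty$.

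For the second step I would fix a row index $i$ and identify the single-parameter filtration function $f_i^F$ that generates the slice $\wh{\CC}_{F,i}$ from $F$ (and likewise $f_i^G$ from $G$). Because both bifiltrations live on the \emph{same} finite cubical complex $\CC$ over the same index grid, I would invoke the Wasserstein stability theorem for single-parameter persistence on a fixed finite complex~\cite{skraba2020wasserstein} (whose bottleneck version is the Cohen--Steiner--Edelsbrunner--Harer theorem), giving $\W_p(\PD(\wh{\CC}_{F,i}),\PD(\wh{\CC}_{G,i})) \le C_\CC \|f_i^F - f_i^G\|_\infty$ with $C_\CC$ depending only on $\CC$. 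The final link is the pointwise estimate $\|f_i^F - f_i^G\|_\infty \le \|F - G\|_\infty$, valid because each row's filtration value is a $1$-Lipschitz function of the relevant coordinate of the bifiltration function. Summing over the $m$ rows and setting $\wh{C}_{\VectMP} := m\,\wh{C}_{\VectSP}\,C_\CC$ then closes the argument.

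The main obstacle I anticipate lies in the second step — precisely identifying, for each row, the single-parameter filtration function $f_i^F$ induced by $F$ and verifying its $1$-Lipschitz dependence on $F$ in the sup-norm. The subtle point is that horizontal slicing fixes the first parameter, so a cube whose first-coordinate value lies near the slice threshold could in principle enter the $F$-slice and the $G$-slice at very different times, threatening the coordinate-wise comparison. I would resolve this using the discretized staircase representation of \cref{eq:step-function}: there each row is an honest sublevel-set filtration of the whole complex valued in a fixed finite index set, so a cube's within-row appearance time depends $1$-Lipschitz-continuously on the corresponding coordinate of the (min-max normalized) bifiltration function. This legitimizes $\|f_i^F - f_i^G\|_\infty \le \|F-G\|_\infty$ and confines any boundary effects to the quantization already built into the staircase, which does not enlarge the sup-norm bound.
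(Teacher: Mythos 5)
Your overall route is the same as the paper's: both arguments apply the induced-MP stability theorem (Thm.~\ref{thm:stability-sup}) with the PersLay vectorization playing the role of the stable SP vectorization $\varphi$, thereby reducing the corollary to a per-row bound of the form $\mathrm{dist}(\PD(\wh{\CC}_F^{m_0}),\PD(\wh{\CC}_G^{m_0}))\le \|F-G\|_\infty$, and then sum the per-row constants. Where you diverge is in how that per-row bound is obtained. The paper obtains it in one stroke by citing \cite[Theorem A.2]{carriere2020multiparameter}, which is invoked to give $d_B(\PD(\wh{\CC}_F^{m_0}),\PD(\wh{\CC}_G^{m_0}))\le\|F-G\|_\infty$ for each row of the bipersistence module. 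You instead try to re-derive it from classical single-parameter Wasserstein stability on a fixed complex together with the estimate $\|f_i^F-f_i^G\|_\infty\le\|F-G\|_\infty$ for the induced row filtration functions.

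That estimate is where your argument has a genuine gap, and you have correctly located the danger without defusing it. For a horizontal slice at level $s_i$, the induced single-parameter filtration function is
\begin{equation*}
f_i^F(Q)=\begin{cases} F_2(Q) & \text{if } F_1(Q)\le s_i,\\ +\infty & \text{otherwise,}\end{cases}
\end{equation*}
so a cube with $F_1(Q)=s_i+\e/2$ and $G_1(Q)=s_i-\e/2$ gives $|f_i^F(Q)-f_i^G(Q)|=\infty$ even though $\|F-G\|_\infty=\e$: the two row filtrations are not sup-norm-close functions on a common complex (this is exactly the known degeneracy of fibered-barcode stability along horizontal and vertical lines), so the Cohen--Steiner--Edelsbrunner--Harer/Skraba--Turner step cannot be applied coordinate-wise. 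Your proposed repair via the staircase representation of \cref{eq:step-function} does not close the gap for two reasons. First, the corollary is stated for arbitrary bifiltration functions $F,G:\CC\to\R^2$, so substituting the network's quantized compact representation changes the object being bounded. Second, even on its own terms, $x\mapsto\lfloor Nx\rfloor$ is not Lipschitz: it has unit jumps, so the compact representations built from two arbitrarily close decoder outputs can still differ by $1$ at some pixel, leaving a non-vanishing additive discrepancy that is incompatible with a bound of the form $\wh{C}_{\VectMP}\|F-G\|_\infty$. To complete your variant you would need either to import the slice-stability result the paper cites, or to restrict to pairs $F,G$ with $F_1=G_1$ (so each row's subcomplex is common to both and $f_i^F-f_i^G=F_2-G_2$ there), neither of which your current write-up supplies.
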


\begin{proof} The proof follows from the stability of Perslay Vectorizations~\cite{carriere2020perslay}, and the previous stability theorem. In particular, while the authors proved their stability theorem for clique complexes in~\cite{carriere2020perslay}, their proofs as single persistence vectorization extends to cubical complexes as all the relevant results extends to this context~\cite{skraba2020wasserstein}. By \cite[Theorem A.2]{carriere2020multiparameter}, as single persistence module, for each row $1\leq m_0\leq M$ of the bipersistence module $\wh{\CC}$, we obtain $d_B(\PD(\wh{\CC}_F^{m_0}), \PD(\wh{\CC}_F^{m_0})\leq \|F-G\|_\infty$ where $\wh{\CC}^{m_0}$ represents $m_0^{th}$ row in bipersistence module $\wh{\CC}$. As Perslay vectorizations are stable, we obtain $$\mathrm{d}(\VectMP^{m_0}(\wh{\CC}_F),\VectMP^{m_0}(\wh{\CC}_F)\leq C_{m_0}d_B(\PD(\wh{\CC}_F^{m_0}), \PD(\wh{\CC}_F^{m_0})$$ where $\VectMP^{m_0}$ represents restriction of $\VectMP$ to $m_0^{th}$ row. After obtaining similar inequality for each row, the proof follows by following the same procedure in the proof of Thm. 1 where $\wh{C}_{\VectMP}=\sum_{m=1}^M C_m$.
\end{proof}

While we define the metrics and MP vectorizations for a 2-parameter case, it can naturally be adapted to any multiparameter case. Similarly, the proof can easily be adapted to higher dimensional images. In this paper, we utilize Silhouette and Betti curves as the vectorization method $\varphi$. while Betti curves are not stable with respect to the bottleneck ($\W_\infty$) distance~\cite{ali2023survey}, they are stable with respect to the $\W_1$-metric~\cite{dlotko2023euler}. On the other hand, Silhouette vectorizations are stable as they are derived from persistence landscapes~\cite{bubenik2012statistical,chazal2014stochastic}. Therefore, by applying Thm. 1 to these vectorizations, we have the stability for both MP Betti and MP Silhouette vectorizations, we employed in this paper. Furthermore, adapting the stability result given in \cite{oudot2024stability} to our setting, one can obtain another proof for the stability of MP Betti vectorization with respect to a signed Wasserstein-1 distance, a bottleneck-type metric they introduce.

\section{Multiparameter Persistence Theory} \label{sec:MPtheory}
Multipersistence theory has garnered significant research interest due to its potential to enhance the performance and robustness of single persistence theory. While single persistence extracts topological features from a one-parameter filtration, a multidimensional filtration with multiple parameters should, in principle, provide a richer and more informative summary for machine learning applications. However, technical challenges in the theory have hindered its full realization, leaving multipersistence largely unexplored in the ML community. Here, we summarize these key challenges. For a more detailed discussion, \cite{botnan2022introduction} provides an overview of the current state of the theory and its major obstacles.  

In single persistence, the threshold space \(\{\alpha_i\}\) is a totally ordered subset of \(\mathbb{R}\), meaning that any topological feature appearing in the filtration sequence \(\{\Delta_i\}\) has a well-defined birth and death time, with birth occurring before death. This ordering property allows for the decomposition of the persistence module \(M = \{H_k(\Delta_i)\}_{i=1}^{N}\) into barcodes, a concept formalized in the 1950s through the Krull-Schmidt-Azumaya Theorem~\cite{botnan2022introduction} (Theorem 4.2). This decomposition forms the basis of what is known as a {\em Persistence Diagram}.  

However, in higher dimensions (\(d=2\) or more), the threshold set \(\{(\alpha_i, \beta_j)\}\) is only partially ordered (a poset), meaning that while some indices have a clear ordering (e.g., \((1,2) < (4,7)\)), others do not (e.g., \((2,3)\) vs. \((1,5)\)). Consequently, in a multipersistence grid \(\{\Delta_{ij}\}\), birth and death times are no longer well-defined. Furthermore, the Krull-Schmidt-Azumaya Theorem does not extend to higher dimensions~\cite{botnan2022introduction} (Section 4.2), making barcode decomposition impossible for general multipersistence modules. This fundamental limitation prevents a straightforward generalization of single persistence to multipersistence. Even in cases where a meaningful barcode decomposition exists, the challenge remains of faithfully representing these barcodes due to the inherent partial ordering. Multipersistence modules are an active area of research in commutative algebra, with further details available in~\cite{eisenbud2013commutative}.  

Despite these challenges, several approaches have been proposed to leverage the multipersistence framework~\citep{lesnick2015theory}. One of the earliest methods, introduced by \cite{lesnick2015interactive}, involves analyzing one-dimensional slices of the multipersistence grid to extract the most dominant features. Later, \cite{carriere2020multiparameter} expanded this idea by considering multiple slicing directions (vineyards) and summarizing multiple persistence diagrams (PDs) into a vectorized representation. These slicing-based techniques extract persistence diagrams from predetermined one-dimensional slices and aggregate them into a lower-dimensional summary~\cite{botnan2022introduction}. However, this approach presents two major issues: (1) the topological summary heavily depends on the chosen slicing directions, making direction selection a nontrivial problem, and (2) compression of information from multiple persistence diagrams may lead to significant information loss.

\begin{figure}[t] 
 \centering
    \includegraphics[width=\columnwidth]{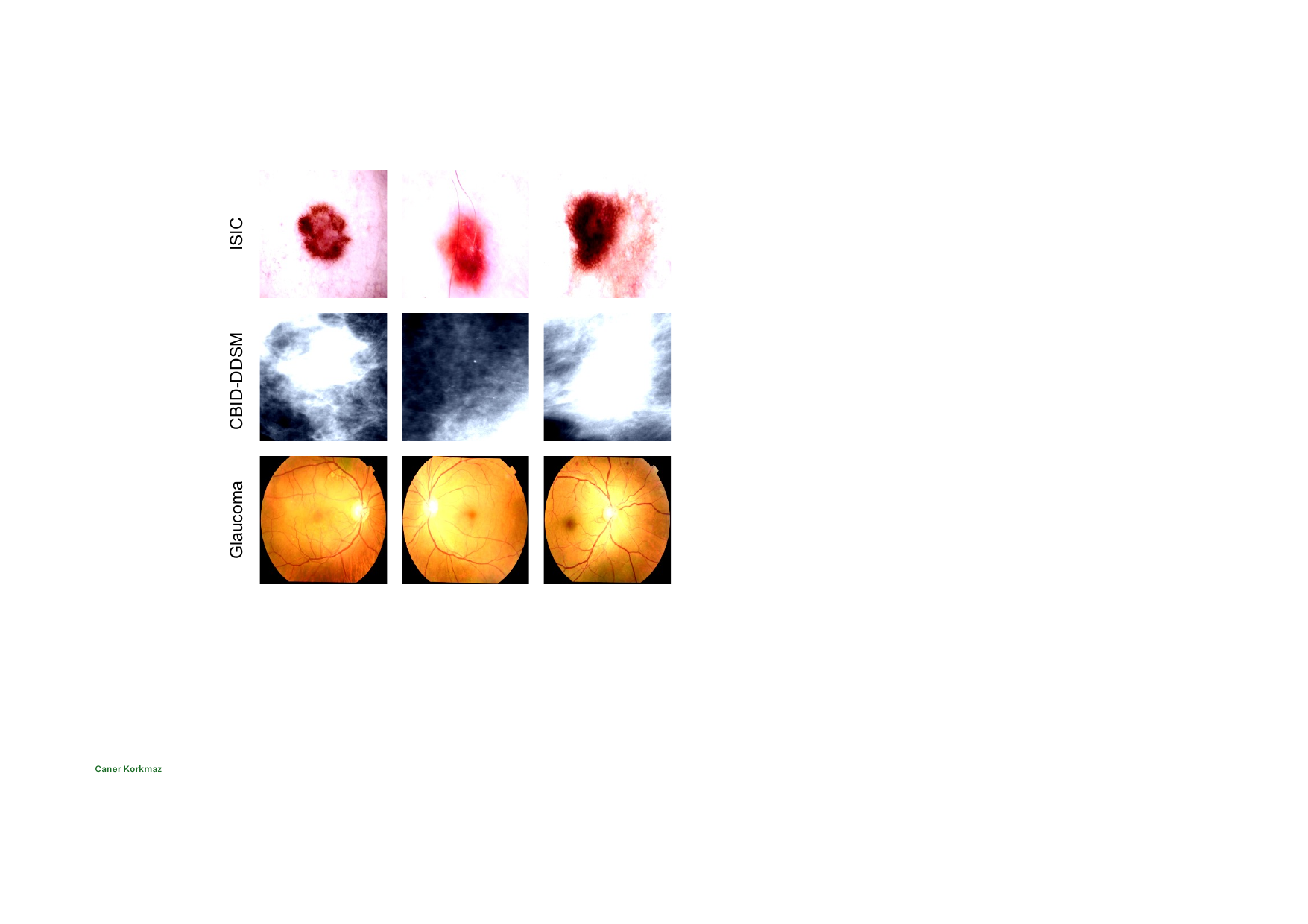}  
  \caption{\textbf{Datasets.} Example images from ISIC dataset (top row)~\cite{codella2019skin}, CBIS-DDSM dataset (middle row)~\cite{lee2017curated}, Glaucoma dataset (bottom row)~\cite{rashid2024eye,sharmin2024dataset}.}
   \label{fig:datasets}
\end{figure}

\begin{figure}[t] 
 \centering
     	\includegraphics[width=\linewidth]{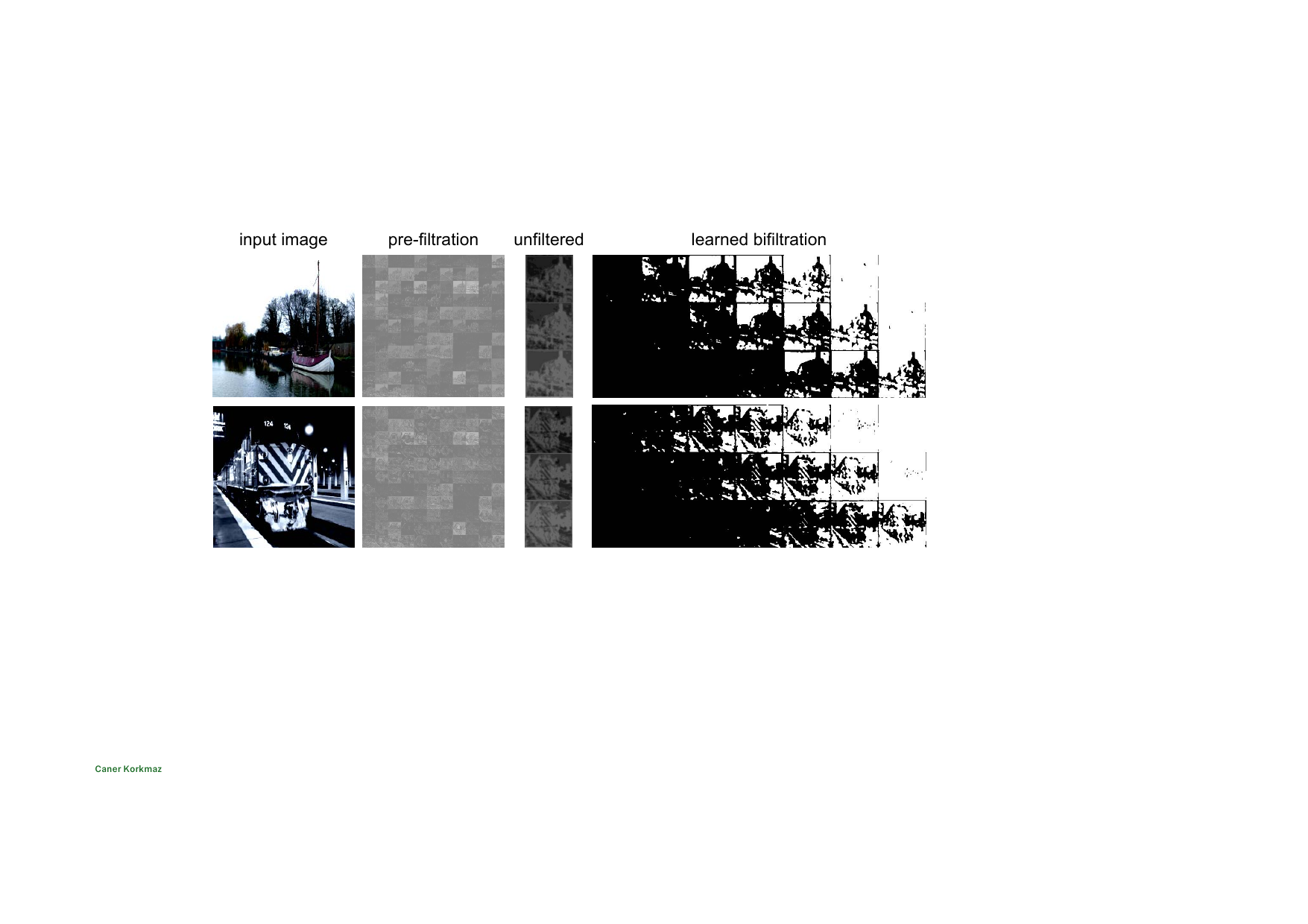}  
  \caption{\textbf{Filtration learning.} Some of the compact multifiltration representations and the corresponding bifiltrations from the first layer of our U-Net backbone Topo-MP network on PASCAL VOC 2012 dataset.}
   \label{fig:bifiltrations-voc}
   \vspace{-5mm}
\end{figure}

A different approach to vectorizing multipersistence modules was introduced by~\citet{vipond2020multiparameter}, who extended persistence landscapes into higher dimensions. Unlike slicing-based methods, this approach does not rely on a predetermined global slice direction. Instead, for each point \(\mathbf{x} \in \mathbb{R}^n\), where \(n\) is the dimension of the multipersistence module, the \(k^{\text{th}}\)-landscape explores the widest direction in which the rank invariant has a nontrivial image. From this perspective, the Multipersistence Landscape can be seen as a more faithful representation of the multipersistence module. While this method is particularly effective in settings where key insights come from a few dominant topological features (e.g., point clouds or sparse data), its computational complexity limits its practicality for large datasets with many topological features.  Recently, \cite{chambers2023meta} introduced the meta-rank and meta-diagram as novel invariants for 2-parameter persistence modules, demonstrating their equivalence to the rank invariant and signed barcode while providing computational improvements and an intuitive visualization as a persistence diagram of diagrams. Again, very recently, \cite{loiseaux2023stable} extends stable vectorization techniques from one-parameter to multiparameter persistent homology by leveraging signed barcodes as signed Radon measures, enabling the computation of informative and provably stable feature vectors that enhance performance in topology-based data analysis.

While these approaches effectively capture dominant topological patterns in sparse data, their applications are largely limited to point cloud and graph settings. In contrast, our approach offers a more computationally efficient and versatile vectorization for cubical multipersistence, where the key topological signatures mostly arise from tracking the density of small features, making it suitable for image analysis.

\begin{table*}[]
\caption{\textbf{Limited Data performances (ISIC and CBIS-DDSM).} Performance of the baseline Swin model and hybrid models incorporating Single Persistence (Swin-SP) and Multi-Persistence (Swin-MP) outputs under limited data conditions. The model was trained on X\% of the dataset (specified in the first column) while using the original test set for evaluation, with early stopping based on validation accuracy. Both datasets are evaluated and averaged over 4 different seeds, with standard deviations reported after $\pm$. Best AUC results are given in \textbf{bold}, while best accuracy results are given in \textcolor{blue}{\textbf{blue}} for each setting.}
\label{tab:limited}
\centering
\resizebox{\linewidth}{!}{
\begin{tabular}{lcc|cc|cc||cc|cc|cc}
& \multicolumn{6}{c}{{\textbf{ISIC}}} & \multicolumn{6}{c}{{\textbf{CBIS-DDSM}}} \\
\cmidrule(l){2-7} \cmidrule(l){8-13} 

& \multicolumn{2}{c}{\bf Swin~\cite{liu2022swin}}  & \multicolumn{2}{c}{\bf TopoSwin-SP*} & \multicolumn{2}{c}{\bf TopoSwin-MP*}  & \multicolumn{2}{c}{\bf Swin~\cite{liu2022swin}}  & \multicolumn{2}{c}{\bf TopoSwin-SP*} & \multicolumn{2}{c}{\bf TopoSwin-MP*} \\ 

\cmidrule(l){2-3} \cmidrule(l){4-5} \cmidrule(l){6-7}  \cmidrule(l){8-9} \cmidrule(l){10-11} \cmidrule(l){12-13}

\textbf{Data \%} & \textbf{Acc} & \textbf{AUC}& \textbf{Acc} & \textbf{AUC} & \textbf{Acc} & \textbf{AUC} & \textbf{Acc} & \textbf{AUC}& \textbf{Acc} & \textbf{AUC} & \textbf{Acc} & \textbf{AUC} \\ 
   \midrule
{0.05}&  66.62 $\pm$ 4.23  &  \textbf{87.15 $\pm$ 3.85}  &  65.92 $\pm$ 4.44  &  84.20 $\pm$ 6.75  &  \textcolor{blue}{\textbf{68.74 $\pm$ 3.32}}  &  85.94 $\pm$ 3.17  &   58.52 $\pm$ 4.44  &  \textbf{63.34 $\pm$ 1.68}  &  58.72 $\pm$ 2.03  &  62.81 $\pm$ 3.27  &  \textcolor{blue}{\textbf{59.35 $\pm$ 3.26}}  &  61.10 $\pm$ 1.62  \\
{0.10} &  73.90 $\pm$ 2.15  &  91.48 $\pm$ 0.58  &  74.01 $\pm$ 1.88  &  \textbf{91.64 $\pm$ 1.01}  &  \textcolor{blue}{\textbf{74.45 $\pm$ 1.59}}  &  90.71 $\pm$ 1.59  &  60.80 $\pm$ 2.00  &  60.60 $\pm$ 3.15  &  59.40 $\pm$ 2.37  &  61.70 $\pm$ 2.47  &  \textcolor{blue}{\textbf{61.79 $\pm$ 0.88}}  &  \textbf{64.31 $\pm$ 3.13}   \\
{0.20}&  77.38 $\pm$ 1.08  &  93.65 $\pm$ 0.38  &  \textcolor{blue}{\textbf{77.62 $\pm$ 1.94}}  &  \textbf{93.80 $\pm$ 1.17}  &  77.58 $\pm$ 0.68  &  93.07 $\pm$ 0.28  &  62.50 $\pm$ 1.82  &  69.04 $\pm$ 1.16  &  63.21 $\pm$ 3.69  &  \textbf{69.08 $\pm$ 0.96}  & \textcolor{blue}{\textbf{63.38 $\pm$ 1.81}}  &  67.25 $\pm$ 0.83    \\
{0.50}  &  80.82 $\pm$ 1.73  &  \textbf{95.46 $\pm$ 0.83}  &  79.65 $\pm$ 0.50  &  94.80 $\pm$ 0.65  &  \textcolor{blue}{\textbf{81.64 $\pm$ 0.62}}  &  95.33 $\pm$ 0.89  &  67.95 $\pm$ 1.46  &  \textbf{74.92 $\pm$ 1.24}  &  67.53 $\pm$ 0.94  &  73.22 $\pm$ 1.85  &  \textcolor{blue}{\textbf{68.61 $\pm$ 1.13}}  &  74.46 $\pm$ 1.36  \\
{1.00}   &  82.67 $\pm$ 1.26  &  95.62 $\pm$ 0.15  &  82.50 $\pm$ 1.17  &  95.63 $\pm$ 0.71  &  \textcolor{blue}{\textbf{83.02 $\pm$ 0.35}}  &  \textbf{95.93 $\pm$ 0.55}  &  68.47 $\pm$ 1.76  &  75.53 $\pm$ 0.77  &  69.38 $\pm$ 0.97  &  75.89 $\pm$ 0.97  &  \textcolor{blue}{\textbf{69.60 $\pm$ 0.72}}  &  \textbf{76.84 $\pm$ 0.94}   \\
\bottomrule
\end{tabular}}
\end{table*}

\begin{table}[]
\caption{\textbf{Limited Data performances (Glaucoma).} Performance of the baseline Swin model and hybrid models incorporating Single Persistence (Swin-SP) and Multi-Persistence (Swin-MP) outputs under limited data conditions. The model was trained on X\% of the dataset (specified in the first column) while using the original test set for evaluation. Best AUC results are given in \textbf{bold}, while best accuracy results are given in \textcolor{blue}{\textbf{blue}} for each setting.}
\label{tab:limited-glaucoma}
\centering
\resizebox{\linewidth}{!}{
\begin{tabular}{lcc|cc|cc}
& \multicolumn{6}{c}{{\textbf{Glaucoma}}} \\
\cmidrule(l){2-7}

& \multicolumn{2}{c}{\bf Swin~\cite{liu2022swin}}  & \multicolumn{2}{c}{\bf TopoSwin-SP*} & \multicolumn{2}{c}{\bf TopoSwin-MP*} \\ 

\cmidrule(l){2-3} \cmidrule(l){4-5} \cmidrule(l){6-7} 

\textbf{Data \%} & \textbf{Acc} & \textbf{AUC}& \textbf{Acc} & \textbf{AUC} & \textbf{Acc} & \textbf{AUC}\\ 
   \midrule
{0.05} & \textcolor{blue}{\textbf{79.47}}  & 85.30  & 67.57  & 72.76  & 78.10  & \textbf{88.04}  \\
{0.10} & 78.52  & \textbf{89.42}    & 73.26  & 89.33  & \textcolor{blue}{\textbf{78.74}}  & 89.05  \\
{0.20} & \textcolor{blue}{\textbf{82.95}}  & \textbf{92.00}   & 82.31  & 91.34  & 82.52  & 90.63  \\
{0.50}  & 81.68  & 90.94   & \textcolor{blue}{\textbf{83.57}}  & 91.63  & 81.89  & \textbf{92.64}  \\
{1.00}    & 82.73  & 92.24    & 82.95  & \textbf{92.90}   & \textcolor{blue}{\textbf{84.63}}  & 92.87  \\
\bottomrule
\end{tabular}}
\end{table}

\begin{figure}[t]
    \centering
   \includegraphics[width=\linewidth, keepaspectratio]{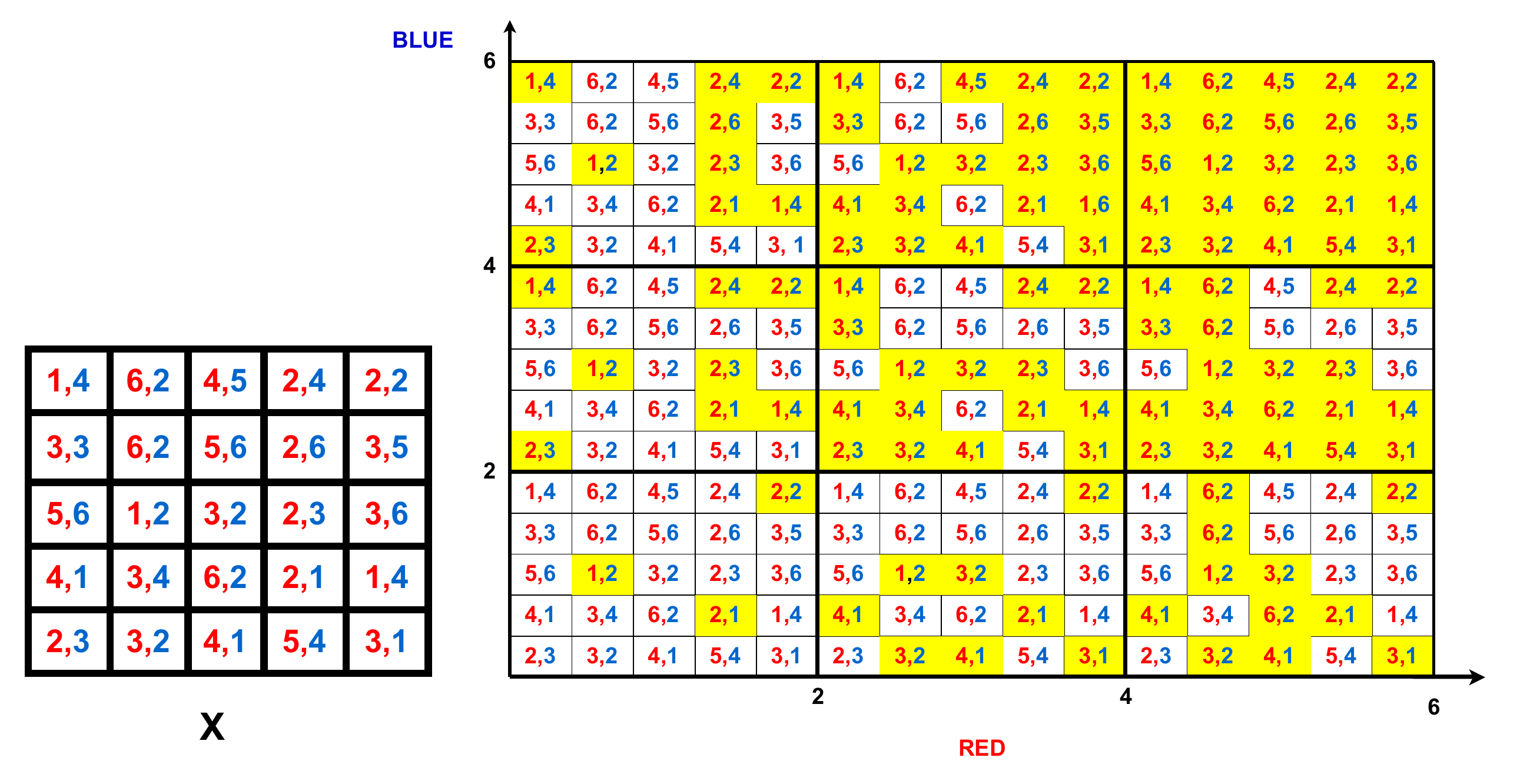}\vspace{-3mm}
    \caption{\footnotesize {\bf Toy illustration of bifiltration.} For a given image $X$ with two color channels, our color multifiltration produces 9 binary images in $3\times 3$ grid. In horizontal directions, we activate (coloring yellow) the pixels whose red color value is below the given threshold. In vertical direction, we only consider blue color values to activate the pixels. 
    }
    \label{fig:bifiltration-toy}
    \vspace{-4mm}
\end{figure}

\section{Common Multifiltrations for Images} \label{sec:bifiltrations}

There are several natural multifiltration methods for cubical persistence the images offer. Here, we list some of them.

\paragraph{Bifiltrations on binary images}
    Let $\X$ be an image of size $r\times s$. A bifiltration induced from $\X$ is a collection of binary images $\{\X_{m,n}\}$ of the same size where $\X_{m,n}\subset \X_{m+1,n}$ and $\X_{m,n}\subset \X_{m,n+1}$ for any $1\leq m\leq M$ and $1\leq n\leq N$. In other words, a bifiltration defines a grid $\G$ of size $M\times N$ where each individual row and column represents a regular filtration as shown in~\cref{fig:bifiltration-toy}, \ie, for any fixed $1\leq m_0 \leq M$ (row), the sequence $\{\X_{m_0,n}\}_1^N$ is a regular filtration of length $N$ (column), and for any fixed $1\leq n_0 \leq N$, the sequence $\{\X_{m,n_0}\}_1^M$ is a regular filtration of length $M$.

\paragraph{Color multifiltrations.} The primary objective of multiparameter persistence lies in effectively leveraging multiple parameters, especially when data offers more than one function to utilize. In color images, each image inherently comprises three natural functions, denoted as $R$, $G$, and $B$. Thus, for every pixel $\Delta_{ij}$, there exist corresponding color values: $R_{ij},G_{ij},B_{ij}\in [0,255]$. To proceed, we establish a three-parameter multifiltration with parameters $\{s_m\}_1^{N_1}$, $\{t_n\}_1^{N_2}$, $\{v_r\}_1^{N_3}$, where $s_m,t_n,v_r\in [0,255]$ are threshold values for color channels $R,G$, and $B$ respectively. By simply defining binary images $\X_{m,n,r}=\{\Delta_{ij}\subset \X \mid  R_{ij}\leq s_m, G_{ij}\leq t_n, B_{ij}\leq v_r\}$, we induce a three-parameter multipersistence module $\{H_k(\X_{m,n,r})\}$, yielding Betti tensors $\B_k(\X)=[\beta^k_{m,n,r}]$. These tensors constitute $3D$ arrays with dimensions $N_1 \times N_2 \times N_2$ (See Fig. 3).

\begin{figure}[b]
\centering
\includegraphics[width=\linewidth]{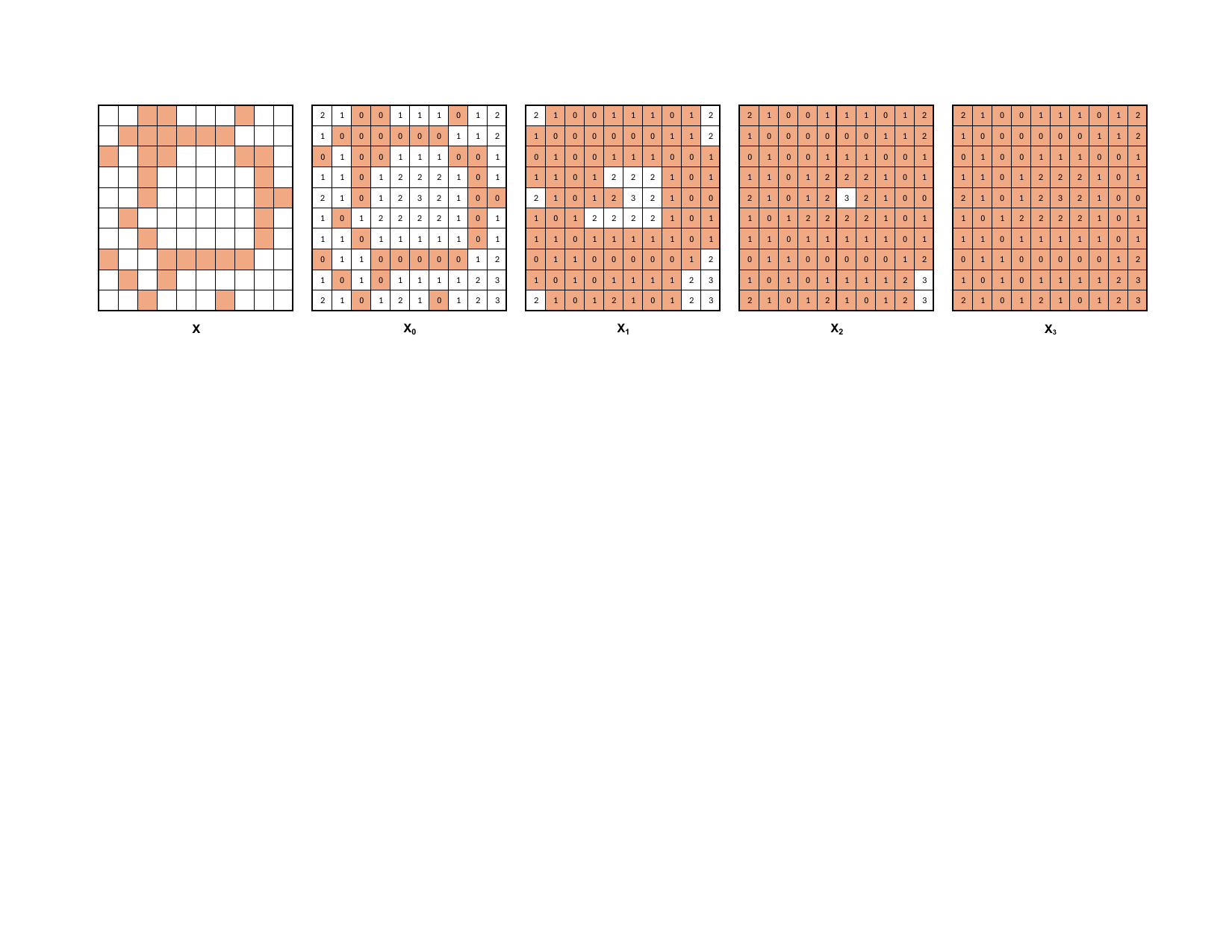}
\caption{\footnotesize {\bf Erosion Filtration.} For a given binary image $\X$, we first define the erosion function (given in $\X_0$). Then, we obtain the filtration of binary images $\X_0\subset\X_1\subset \dots \X_3$, by activating the pixels reaching the threshold value. \label{fig:erosion}}
\end{figure}

\paragraph{Erosion bifiltrations.} One significant limitation of sublevel filtration is its inability to provide information about the sizes of topological features. Instead, it only focuses on the difference in function values between when a topological feature is born and when it dies. To illustrate, consider a grayscale image where all pixels have a grayscale value of $0$ except for one pixel in the center with a value of $255$. The resulting persistence diagram would feature a single large bar $[0,255)$, indicating a very small topological feature—a hole with a diameter of $1$. Conversely, a binary image in $\X_{100}$ might contain a large hole with a diameter of 20, where the pixels of the hole have color values between 101 and 105, then it will be completely filled in by $\X_{105}$. Despite the significant change in the hole's size, the grayscale sublevel filtration would only yield a small bar $(100,105)$ reflecting the difference in grayscale values (color contrast of the hole), without conveying any information about the hole's size.

While persistence homology aims to identify the topological features present in a filtration, sublevel filtrations cannot inherently capture size information for these features. However, alternative filtration methods such as erosion, dilation, and signed distance filtrations have been proposed to address this issue~\cite{garin2019topological}. These methods offer avenues to incorporate size information into the analysis of topological features, complementing the capabilities of sublevel filtration. %

To capture topological features created by the images as well as their sizes, a natural approach is to combine grayscale sublevel filtration with erosion filtration. Erosion filtration is defined for binary images, and basically, it is a sublevel filtration for a special function, called {\em erosion}. Let $\X$ be a grayscale image of size $r\times s$, and let $\{\X_m\}_1^M$ be the sublevel filtration induced by grayscale values described in Section 3. Each $\X_m$ is a binary image with some topological features. Let $\Omega_m$ represent all black pixels in $\X_m$ and $\G$ is $r\times s$ grid. Then, for each $\X_m$, we define an erosion function $\xi_m:\G\to \N$ such that $\xi_m(i,j)=\D(\Omega_m, \Delta_{ij})$ where $\D$ is the Manhattan ($L_1$) metric between the pixel $\Delta_{ij}$ and the black region $\Omega_m$. In other words, $\xi_m(i_0,j_0)=\inf\{|i_0-i'|+|j_0-j'|\mid \Delta_{i'j'}\subset \Omega_m\}$. In particular, for each $\Delta_{ij}\in \Omega_m$, $\xi_m(i,j)=0$. Intuitively, erosion function $\xi_m$ gives $0$ values to all black pixels in the binary image $\X_m$, and measures the $L_1$ distance of each white pixel to black region $\Omega_m$ in $\X_m$ (See~\cref{fig:erosion}). Then, for each $m$, the bifiltration $\{\X_{m,n}\}$ is defined as the collection of binary images $\X_{m,n}=\{\Delta_{ij}\in \X \mid \xi_m(i,j)< n\}$. Hence, if there is a white hole in $\X_m$, the value $d$ of the farthest pixel in the hole to $\Omega_m$ measures the radius of the whole, and produces a bar of size $[0,d)$ in the persistence diagram. In a way, erosion filtration is enabling to measure the size of the wholes in binary images. Recall that the persistence bars for sublevel filtration for color values only give the color difference (contrast) of the hole independent of the size of the hole. Therefore, color filtration and erosion filtration are completely complementary to each other, and combining them produces a very powerful filtration model for the image. Signed distance and dilation are other types of filtrations to capture the size of the topological features in binary images, similar to erosion filtration~\cite{garin2019topological}.

\section{Implementation Details}
In \cref{alg:cubical_persistence}, we present the main algorithm for our CUDA GPU implementation of Cubical Persistence and Cubical Multiparameter Persistence. 
The two main C++/CUDA functions are further explained in \cref{alg:enum_edges} and \cref{alg:joint_pairs}. 
Our implementation is influenced by the CPU-based CubicalRipser software~\cite{kaji2020cubicalripsersoftwarecomputing} for SP. 
We utilize the duality between top-dimensional cell and vertex constructions in persistent homology~\cite{Bleile_2022} to efficiently compute 0\textsuperscript{th} and 1\textsuperscript{st} dimensional homology of 2D images with only minor modifications to the algorithm. 
The compact multifiltration inputs (corresponding to $N$ MP filtration inputs with $C$ row-wise slices) are used to construct an input grid whose structure depends on the target dimension.
For the 0\textsuperscript{th} dimension, the compact filtration input is padded with a threshold value (a constant input representing infinity), corresponding to a primal grid. For the 1\textsuperscript{st} dimension, the input values are padded with threshold, inverted, and then padded again, corresponding to an Alexander Duality-based dual grid.
Following this, \cref{alg:enum_edges} iterates over all edges (horizontal and vertical for both dimensions, plus diagonal edges for the 1\textsuperscript{st} dimension), computes filtration values based on the compact filtration inputs in parallel over the batch, width, and height dimensions, and finally sorts the values in a descending order.

In \cref{alg:joint_pairs}, we use a union-find-based algorithm with a CUDA-adapted implementation of Union-Find~\cite{tarjan84union} with path compression and union by rank. The initialization of the union-find data structure is parallelized over each batch element and each pixel.
The main loop, which calculates persistence pairs by iterating over edges sorted by decreasing death times, is parallelized only over the batch dimension.
Since the number of persistence pairs can vary per input and is not known beforehand, we use a chunking mechanism. Based on a provided chunk size hyperparameter, the loop is executed until a maximum of chunk size pairs are generated for each batch element, while the number of persistence pairs is tracked separately. The kernel for this loop is executed iteratively until the computation for each batch element is complete. The resulting chunks are then concatenated and processed in parallel. This approach allows variable-length persistence pair outputs and can be further optimized by fine-tuning the chunk size hyperparameter for a given task.

Finally, we extract the final persistence pairs using the coordinate locations that correspond to the birth and death values in the original MP filtration.
The algorithm's output for variable-length persistence pairs consists of two tensors: a zero-padded tensor containing the persistence pairs for each batch element and row-wise slice, and a separate tensor storing the lengths for each batch element and row-wise slice. 
This output is then processed by our \name~vectorization layer, which takes masked persistence pair tensors as inputs.
While we typically use a chunk size of 1024, this hyperparameter can be dynamically adjusted during training based on the maximum number of persistence pairs observed in recent batches. 

 \begin{algorithm}[t]
 \SetNoFillComment
    \caption{Cubical Persistence for 2D Images}
    \label{alg:cubical_persistence}
    \SetKwInOut{KwIn}{Input}
    \SetKwInOut{KwOut}{Output}
    \KwIn{Image filtration batches $I_{batch} \in \mathbb{R}^{N \times C \times H \times W}$ ($C$ rows), \\threshold $T$}
    \KwOut{Persistence pairs $P$, Lengths $L$}

    $I \leftarrow \text{Reshape}(I_{batch}, (N \cdot C, H, W))$\;
    $P_{list}, L_{list} \leftarrow [], []$\;

    \For{$d \in \{0, 1\}$}{
        \tcp{Prepare grid for dimension $d$}
        $G \leftarrow \text{PadGrid}(I, \text{value}=T, d)$
        
        \tcp{Compute persistence}
        $V, Idx \leftarrow \text{EnumerateAndSortEdges}(G, d)$\;
        $C, L_d \leftarrow \text{JointPairs}(G, V, Idx, d)$ %

        $P_d \leftarrow \text{ExtractPersistenceValues}(C, I, d)$\;

        $P_{list}.\text{append}(P_d)$\;
        $L_{list}.\text{append}(L_d)$\;
    }

    \tcp{Combine and reshape results}
    $P \leftarrow \text{PadAndStack}(P_{list})$\;
    $L \leftarrow \text{Stack}(L_{list})$\;
    $P, L \leftarrow \text{ReshapeToOriginal}(P, L, (N, C, \dots))$\;
    
    \KwRet{$P, L$}\;
\end{algorithm}
\begin{algorithm}[t]
\SetNoFillComment
    \caption{EnumerateAndSortEdges}
    \label{alg:enum_edges}
    \SetKwInOut{KwIn}{Input}
    \SetKwInOut{KwOut}{Output}
    \KwIn{Grid $G \in \mathbb{R}^{B \times H \times W}$, dimension $d \in \{0, 1\}$}
    \KwOut{Sorted filtration values $V$,\\ Sorted indices $Idx$}
    
    \tcp{Define edge connectivity}
    \uIf{$d = 0$}{
        \tcp{Horizontal and vertical edges}
        $\text{Offsets} \leftarrow \{(1, 0), (0, 1)\}$ 
    }
    \Else{
        \tcp{Include diagonal edges}
        $\text{Offsets} \leftarrow \{(1, 0), (0, 1), (1, 1), (1, -1)\}$ 
    }

    $E_{list} \leftarrow []$ %

    \tcp{Compute filtration values (CUDA)}
    \For{each image in batch $G$ \textbf{in parallel}}{
        \For{each pixel $v=(x,y)$ in image \textbf{in parallel}}{
            \For{each offset $o \in \text{Offsets}$}{
                $u \leftarrow v + o$\;
                $\text{value} \leftarrow \max(G(v), G(u))$\;
                $E_{list}.\text{append}(\text{value})$\;
            }
        }
    }

    \tcp{Sort edges by filtration values}
    $V, Idx \leftarrow \text{SortAndGetIndices}(E_{list}, \text{order}=\text{descending})$\;
    
    \KwRet{$V, Idx$}\;
\end{algorithm}

\begin{algorithm}[h]
\SetNoFillComment

    \caption{JointPairs}
    \label{alg:joint_pairs}
    \SetKwInOut{KwIn}{Input}
    \SetKwInOut{KwOut}{Output}
    \KwIn{Grid $G \in \mathbb{R}^{B \times H \times W}$, Sorted filtr. values $V$, \\Sorted indices $Idx$, dimension $d \in \{0, 1\}$}
    \KwOut{Paired cell coordinates $C$, Lengths $L$}
    
    \tcp{Initialize components for pairing}
    $UF \leftarrow \text{InitializeUnionFind()}$\;
    \For{each vertex $v \in G$ \textbf{in parallel}}{
        $UF.\text{Add}(v, \text{birth}=\text{GetVertexBirth}(G, v))$\;
    }
    
    $C_{list} \leftarrow []$\;
    
    \tcp{Parallel over batch $B$ (CUDA)}
    \For{each edge $e$ in the sorted filtration $(V, Idx)$}{
        $u, v \leftarrow \text{GetVerticesOfEdge}(e)$\;
        $root_u, root_v \leftarrow UF.\text{Find}(u), UF.\text{Find}(v)$\;
        
        \If{$root_u \neq root_v$}{
            $\text{birth}_u, \text{birth}_v \leftarrow UF.\text{GetBirth}(root_u), UF.\text{GetBirth}(root_v)$\;
            
            \uIf{$\text{birth}_u \geq \text{birth}_v$}{
                $C_{list}.\text{append}((\text{coord}(root_u), \text{coord}(e)))$\;
                $UF.\text{Union}(root_u, root_v, \text{birth}=\text{birth}_v)$\;
            }
            \Else{
                $C_{list}.\text{append}((\text{coord}(root_v), \text{coord}(e)))$\;
                $UF.\text{Union}(root_u, root_v, \text{birth}=\text{birth}_u)$\;
            }
        }
    }
    
    \If{$d=0$}{
        $root_{essential} \leftarrow \text{GetFinalComponent}(UF)$\;
        $C_{list}.\text{append}((\text{coord}(root_{essential}), \text{coord}_{inf}))$\;
    }
    $C, L \leftarrow \text{FormatAndPack}(C_{list})$\;
    
    \KwRet{$C, L$}\;
\end{algorithm}
\clearpage

\begin{figure*}[h]
\centering
\hspace{0.02\linewidth}
\includegraphics[width=0.28\linewidth]{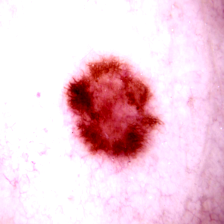}
\hspace{0.04\linewidth}
\includegraphics[width=0.28\linewidth]{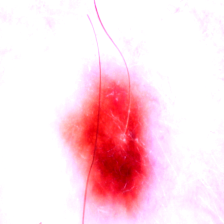}
\hspace{0.04\linewidth}
\includegraphics[width=0.28\linewidth]{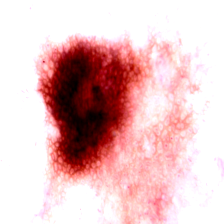}
\includegraphics[width=0.33\linewidth]{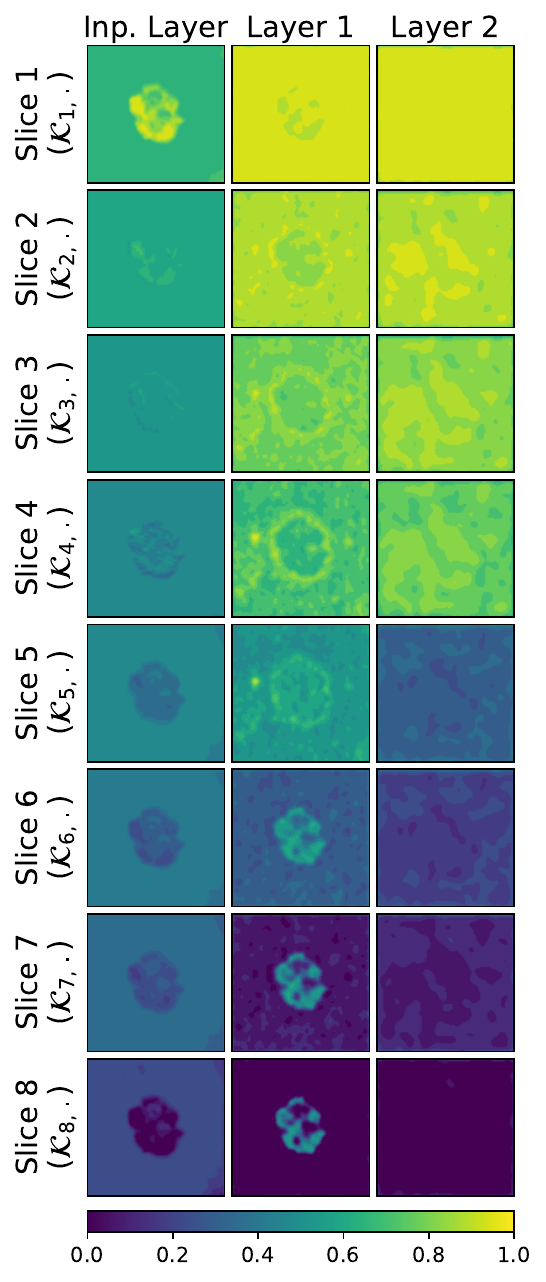}
\includegraphics[width=0.33\linewidth]{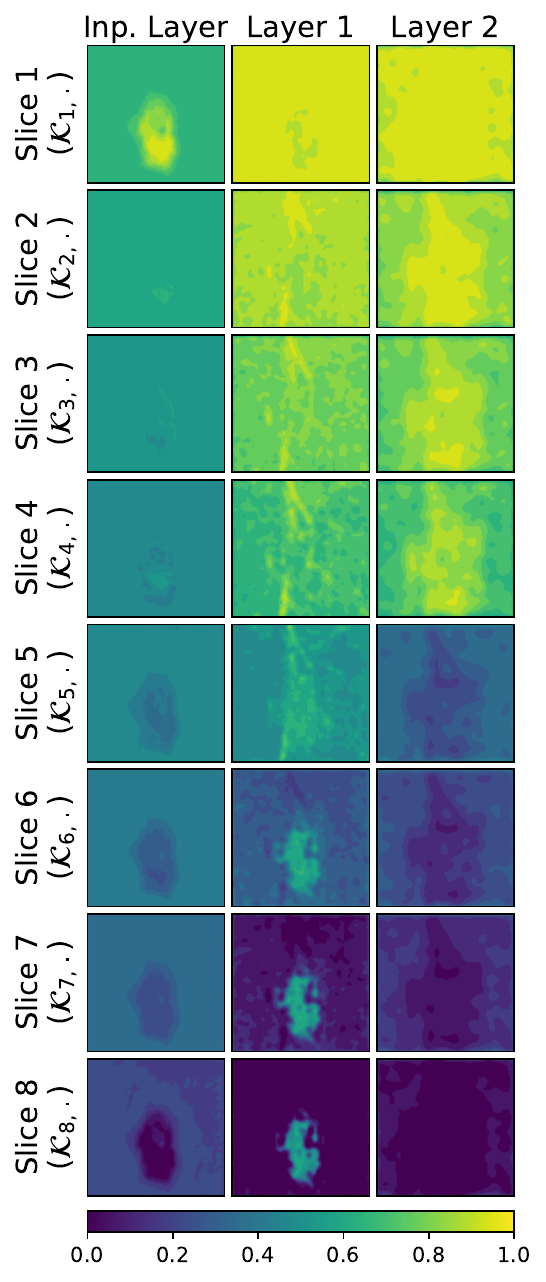}
\includegraphics[width=0.33\linewidth]{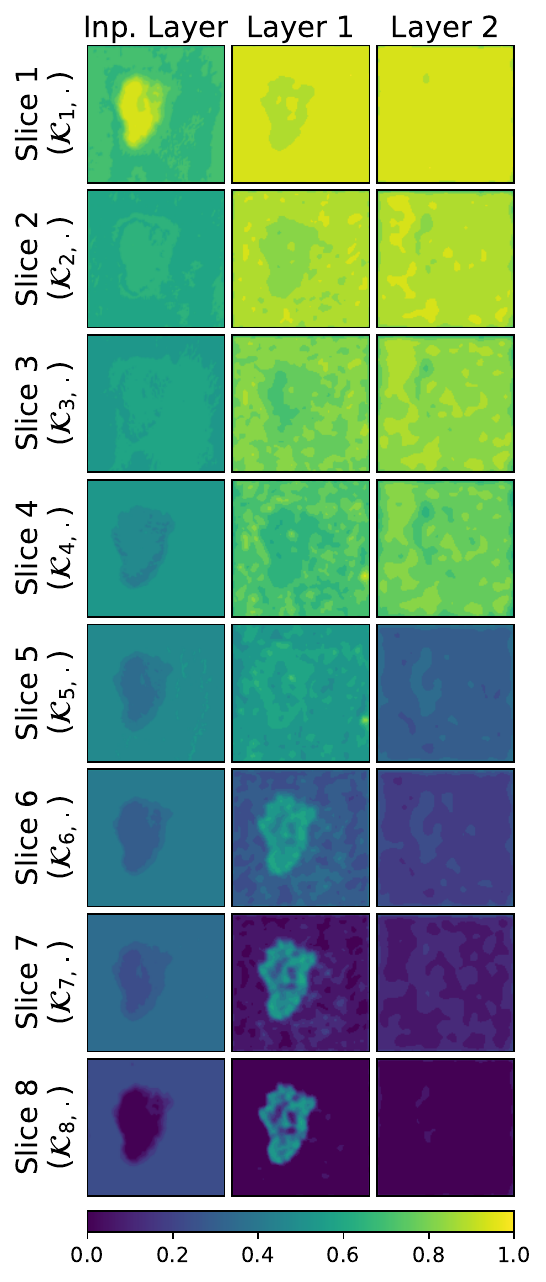}
\caption{
Learned MP Filtration examples for sample images from ISIC dataset~\cite{codella2019skin} for our model TopoSwin-MP with corresponding input images on top. Each slice is a compact representation of one row-wise slice of the learned multifiltration from the input layer, layer 1 and layer 2 learnable cubical multipersistence modules.
\label{fig:isic-learnedmp-compact-filts}}
\end{figure*}
\clearpage

\begin{figure*}[h]
\centering
\hspace{0.03\linewidth}
\includegraphics[width=0.28\linewidth]{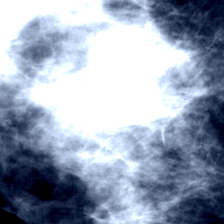}
\hspace{0.045\linewidth}
\includegraphics[width=0.28\linewidth]{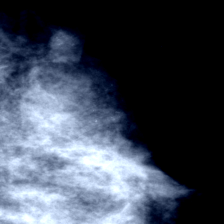}
\hspace{0.038\linewidth}
\includegraphics[width=0.28\linewidth]{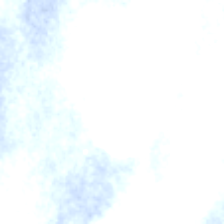}
\includegraphics[width=0.33\linewidth]{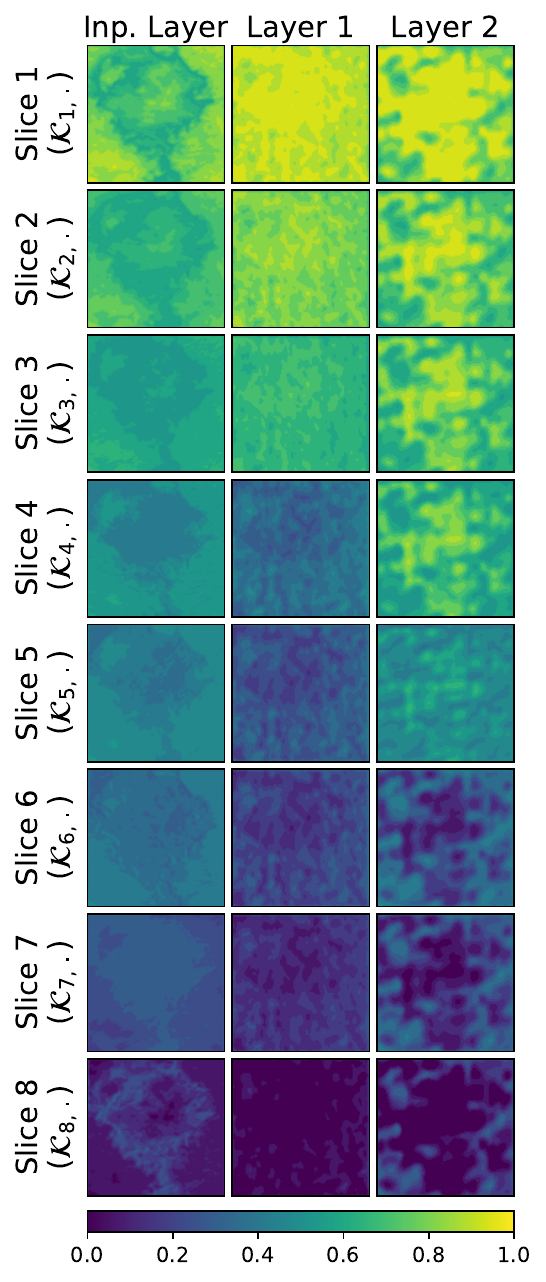}
\includegraphics[width=0.33\linewidth]{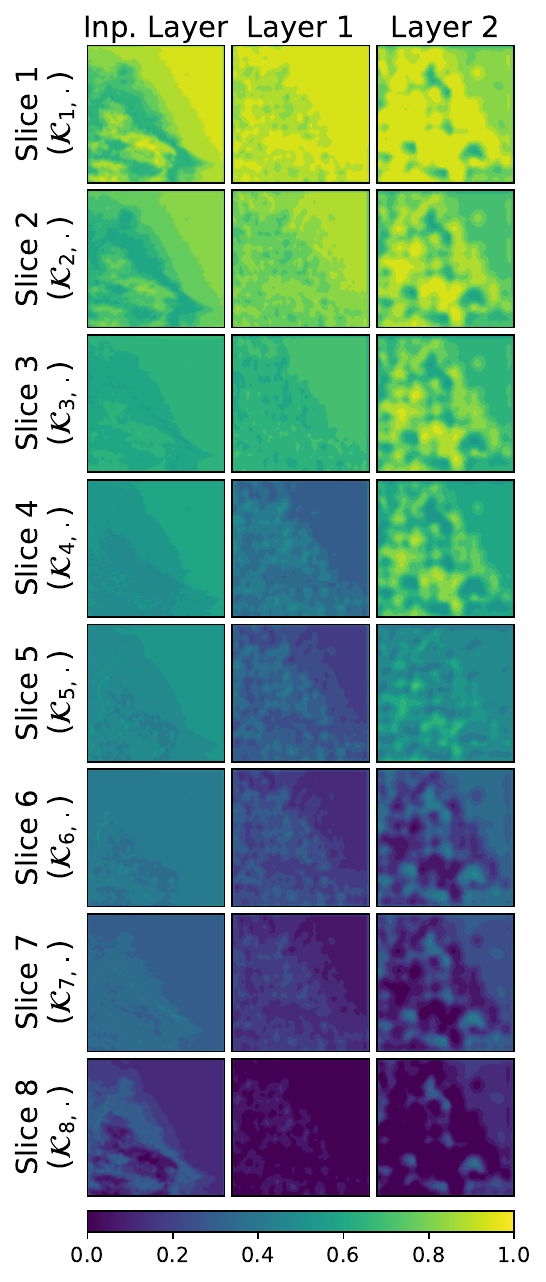}
\includegraphics[width=0.33\linewidth]{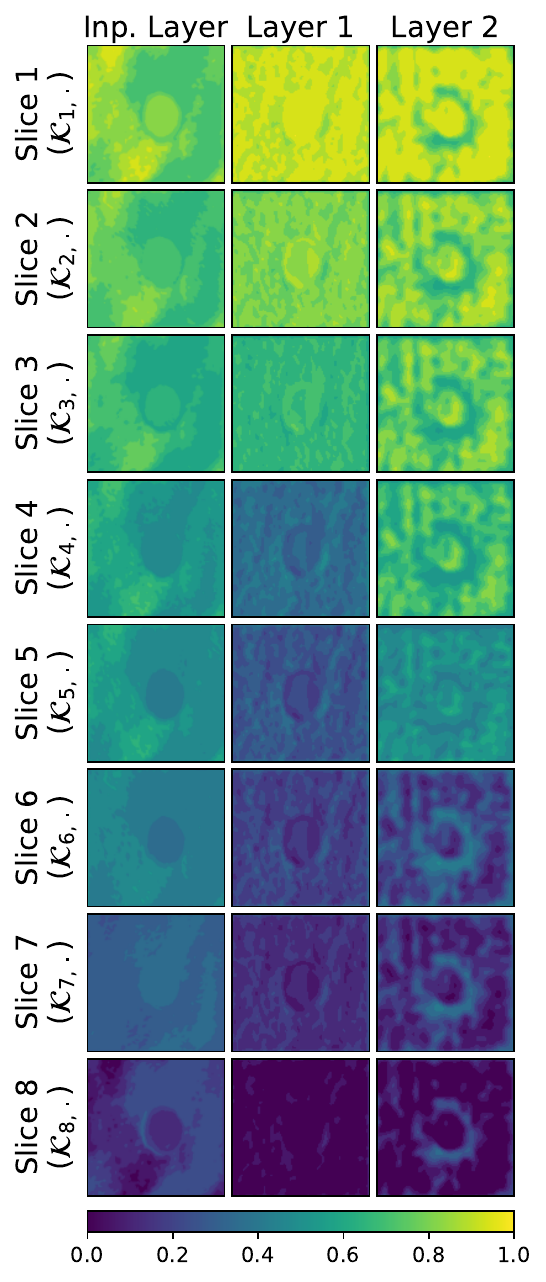}
\caption{
Learned MP Filtration examples for sample images from CBIS-DDSM dataset~\cite{lee2017curated} for our model TopoSwin-MP with corresponding input images on top. Each slice is a compact representation of one row-wise slice of the learned multifiltration from the input layer, layer 1 and layer 2 learnable cubical multipersistence modules.
\label{fig:cbis-learnedmp-compact-filts}}
\end{figure*}
\clearpage

\begin{figure*}[h]
\centering
\hspace{0.013\linewidth}
\includegraphics[width=0.28\linewidth]{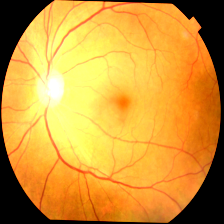}
\hspace{0.045\linewidth}
\includegraphics[width=0.28\linewidth]{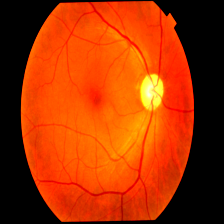}
\hspace{0.045\linewidth}
\includegraphics[width=0.28\linewidth]{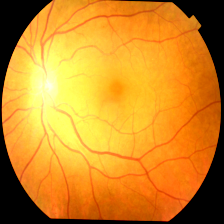}
\includegraphics[width=0.33\linewidth]{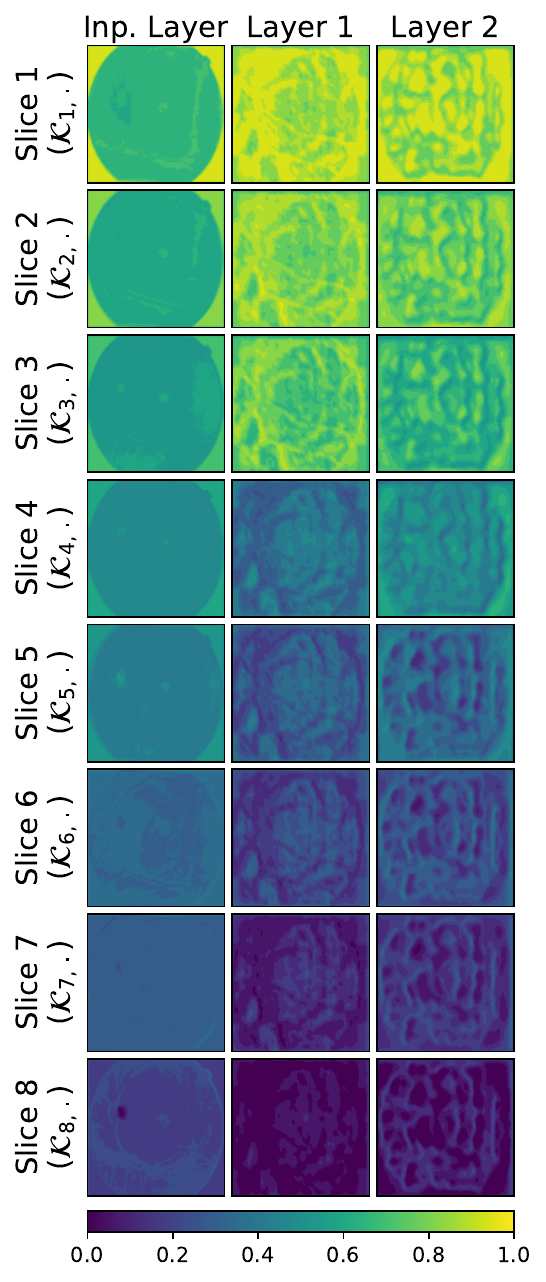}
\includegraphics[width=0.33\linewidth]{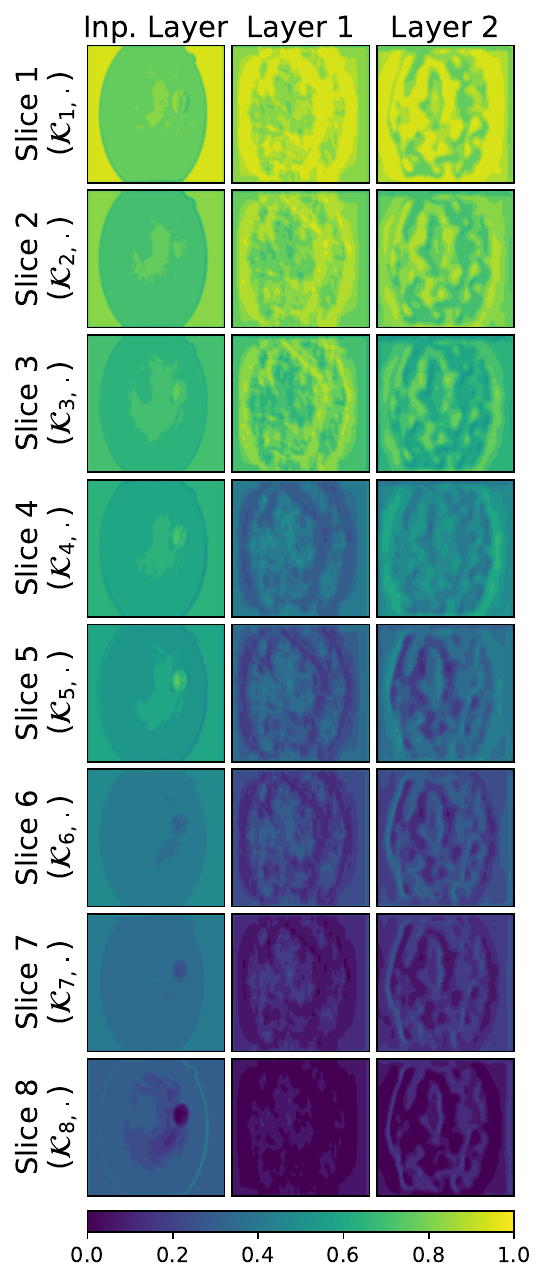}
\includegraphics[width=0.33\linewidth]{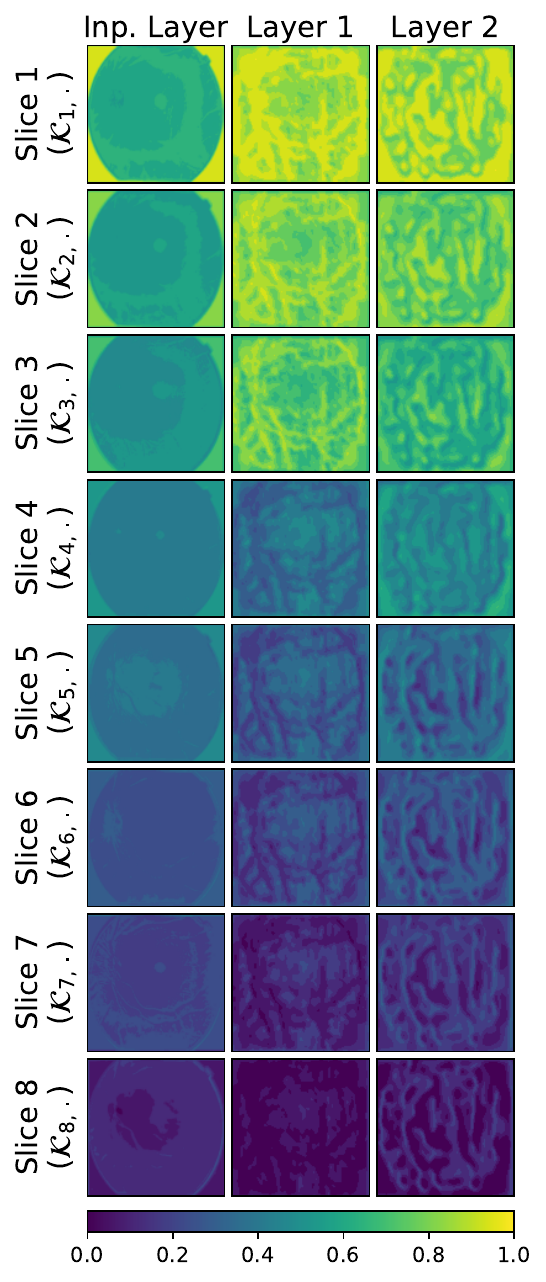}
\caption{
Learned MP Filtration examples for sample images from Glaucoma dataset~\cite{rashid2024eye,sharmin2024dataset} for our model TopoSwin-MP with corresponding input images on top. Each slice is a compact representation of one row-wise slice of the learned multifiltration from the input layer, layer 1 and layer 2 learnable cubical multipersistence modules.
\label{fig:glaucoma-learnedmp-compact-filts}}
\end{figure*}

\clearpage

\begin{figure*}[h]
\centering
\hspace{-0.025\linewidth}
\includegraphics[width=0.27\linewidth]{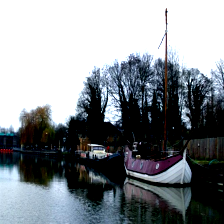}
\hspace{0.032\linewidth}
\includegraphics[width=0.27\linewidth]{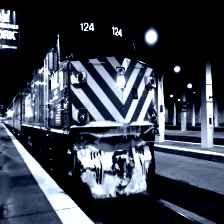}
\hspace{0.035\linewidth}
\includegraphics[width=0.27\linewidth]{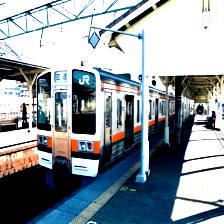}
\includegraphics[width=0.31\linewidth]{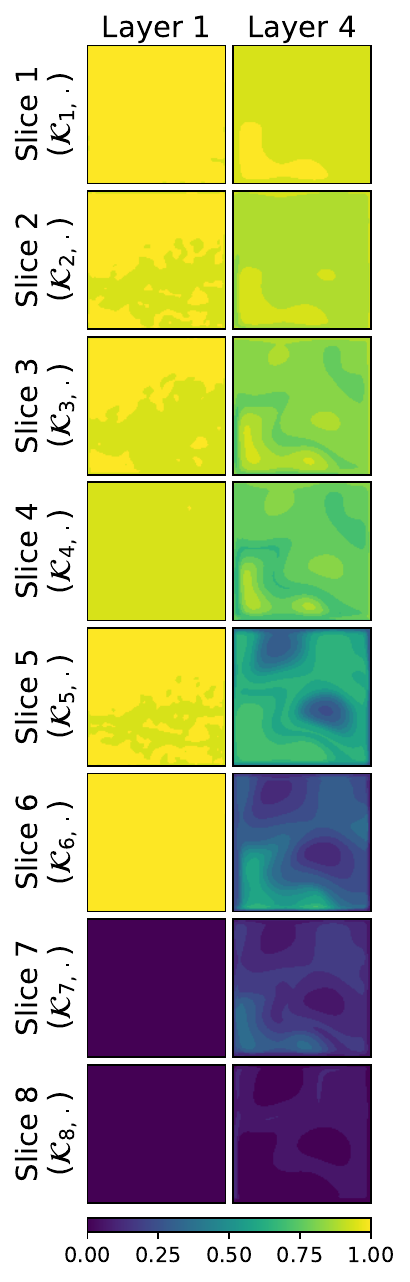}
\includegraphics[width=0.31\linewidth]{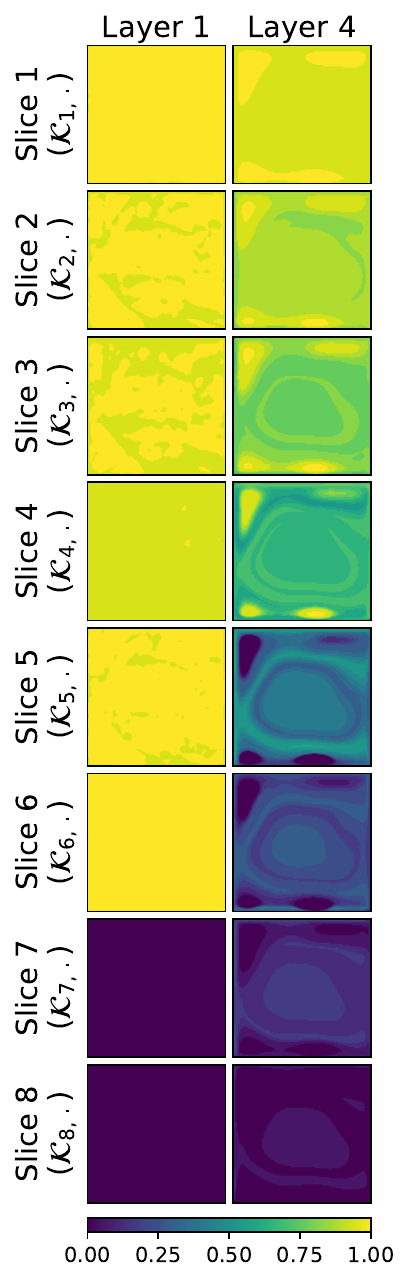}
\includegraphics[width=0.31\linewidth]{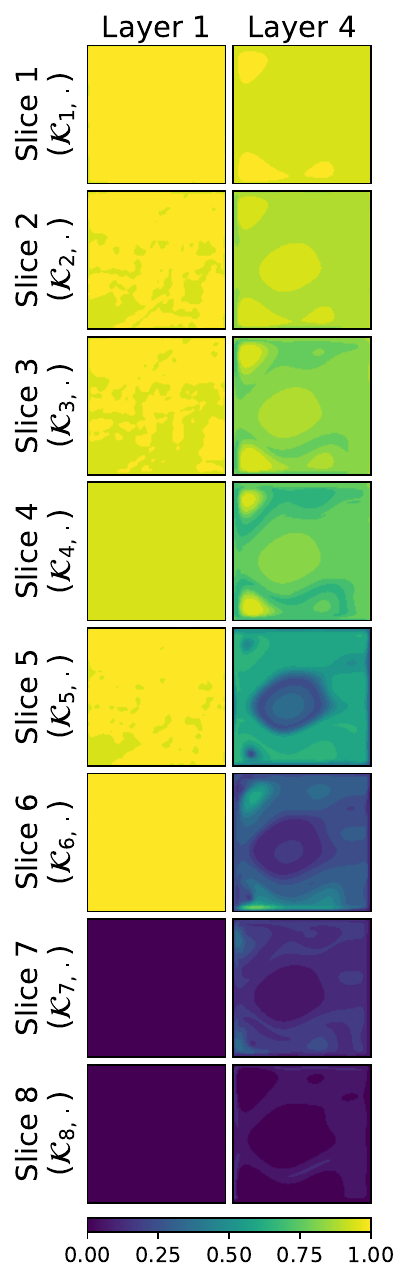}
\caption{
Learned MP Filtration examples for sample images from PASCAL VOC 2012 dataset~\cite{Everingham15} trained with FCN-Resnet 50~\cite{long2015fully} backbone Topo-MP with corresponding input images on top. Each slice is a compact representation of one row-wise slice of the learned multifiltration from the layer 1 and layer 4 learnable cubical multipersistence modules.
\label{fig:voc-resnet-learnedmp-compact-filts}}
\end{figure*}

\clearpage

\begin{figure*}[h]
\centering
\includegraphics[width=0.28\linewidth]{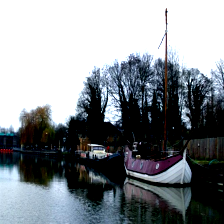}
\hspace{0.024\linewidth}
\includegraphics[width=0.28\linewidth]{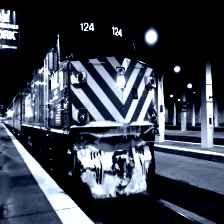}
\hspace{0.024\linewidth}
\includegraphics[width=0.28\linewidth]{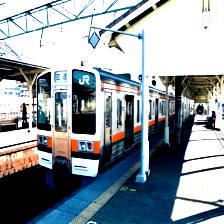}
\hspace{0.01\linewidth}
\includegraphics[width=0.3\linewidth]{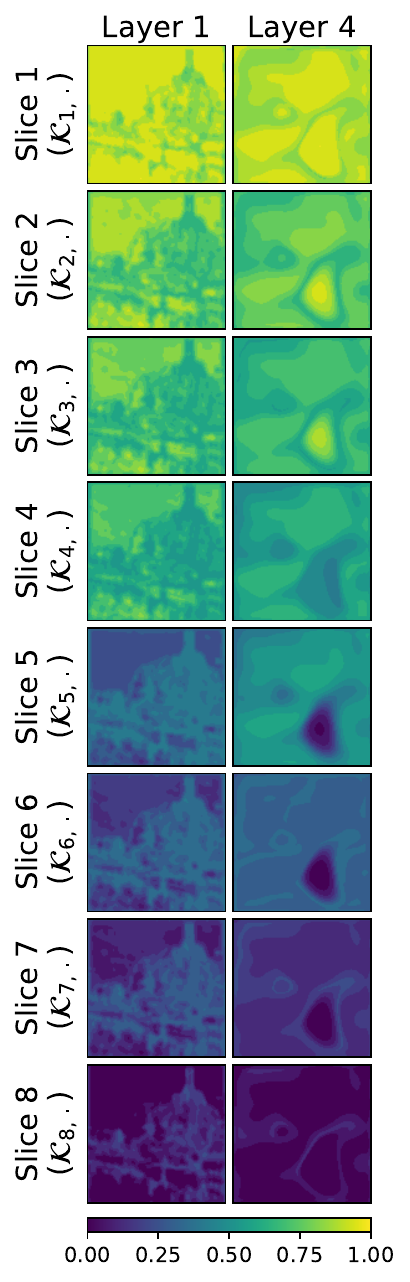}
\hspace{0.01\linewidth}
\includegraphics[width=0.3\linewidth]{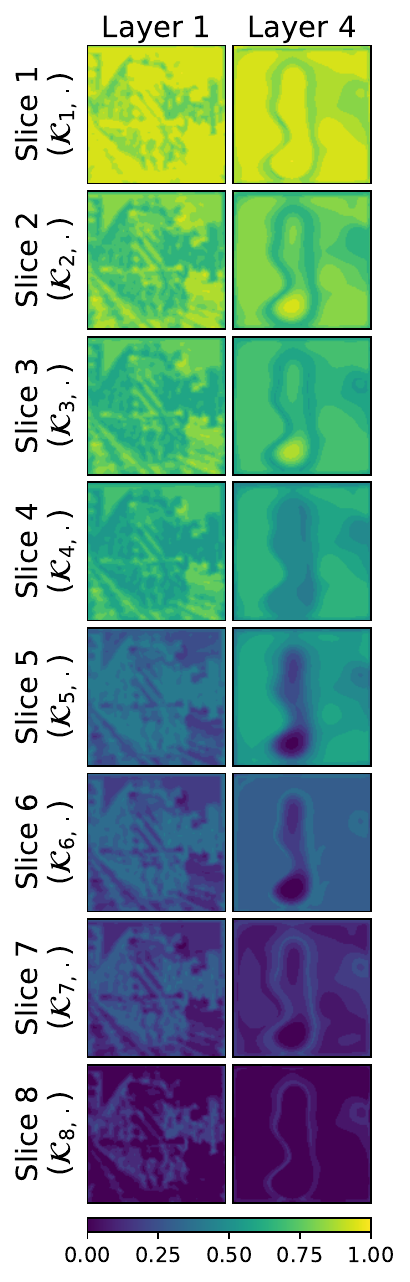}
\hspace{0.01\linewidth}
\includegraphics[width=0.3\linewidth]{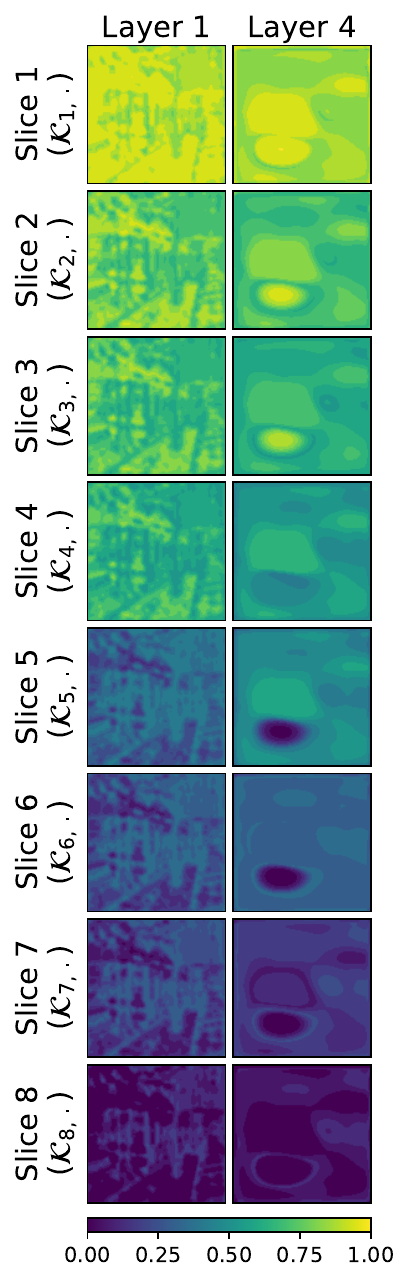}
\caption{
Learned MP Filtration examples for sample images from PASCAL VOC 2012 dataset~\cite{Everingham15} trained with U-Net~\cite{ronneberger2015unet} backbone Topo-MP with corresponding input images on top. Each slice is a compact representation of one row-wise slice of the learned multifiltration from the layer 1 and layer 4 learnable cubical multipersistence modules.
\label{fig:voc-unet-learnedmp-compact-filts}}
\end{figure*}

\clearpage
 

\end{document}